\newlength{\defbaselineskip}
\newcommand{\setlinespacing}[1]%
           {\setlength{\baselineskip}{#1 \defbaselineskip}}
\newcommand{\actaqed}{\hfill $\actabox$}
{\medskip\noindent \textit{Proof of #1. }}%
{\actaqed \medskip}
\def \bx{{\mathbf x}}
\def\mb{{\bar m}}
\def\Mb{{\bar M}}
\def\Nb{{\bar N}}
\def\D{{\mathcal D}}
\def\U{{\mathcal U}}
\def\V{{\mathcal V}}
\def\Th{{\Theta}}
\def \<{\langle}
\def\>{\rangle}
\def \ep{\epsilon}
\def \de{\delta}
\def \ff{\varphi}
\def\bN{{\mathbf N}}
\def \sp{\operatorname{span}}
\def\bt{\beta}
\def\la{\lambda}
\newtheorem{Theorem}{Theorem}[section]
\newtheorem{Lemma}{Lemma}[section]
\newtheorem{Proposition}{Proposition}[section]
\newtheorem{Remark}{Remark}[section]
\newtheorem{Corollary}{Corollary}[section]
\numberwithin{equation}{section}
\begin{document}
\title{{Nonlinear tensor product approximation of functions}\thanks{\it Math Subject Classifications.
primary:  41A65; secondary: 41A25, 41A46, 46B20.}}
\author{D. Bazarkhanov \thanks{Institute of Mathematics and Mathematical Modeling, Kazakhstan} and V. Temlyakov \thanks{ University of South Carolina, USA, and Steklov Institute of Mathematics, Russia. Research was supported by NSF grant DMS-1160841 }} \maketitle
\begin{abstract}
{We are interested in approximation of a multivariate function $f(x_1,\dots,x_d)$ by linear combinations of products $u^1(x_1)\cdots u^d(x_d)$ of univariate functions $u^i(x_i)$, $i=1,\dots,d$. In the case $d=2$ it is a classical problem of bilinear approximation. In the case of approximation in the $L_2$ space the bilinear approximation problem is closely related to the problem of singular value decomposition (also called Schmidt expansion) of the corresponding integral operator with the kernel $f(x_1,x_2)$. There are known results on the rate of decay of errors of best bilinear approximation in $L_p$ under different smoothness assumptions on $f$. The problem of multilinear approximation (nonlinear tensor product approximation) in the case $d\ge 3$ is more difficult and much less studied than the bilinear approximation problem. We will present   results on best multilinear approximation in $L_p$ under mixed smoothness assumption on $f$. }
\end{abstract}

\section{Introduction}

In this paper we study multilinear approximation (nonlinear tensor product approximation) of functions.
 For a function $f(x_1,\dots,x_d)$ denote
$$
\Th_M(f)_X:=\inf_{\{u^i_j\},  j=1,\dots,M, i=1,\dots,d}\|f(x_1,\dots,x_d) - \sum_{j=1}^M\prod_{i=1}^d u^i_j(x_i)\|_X
$$
and for a function class $F$ define
$$
\Th_M(F)_X:=\sup_{f\in F} \Th_M(f)_X.
$$
In the case $X=L_p$ we write $p$ instead of $L_p$ in the notation.
In other words we are interested in studying $M$-term approximations of functions with respect to the dictionary
$$
\Pi^d := \{g(x_1,\dots,x_d): g(x_1,\dots,x_d)=\prod_{i=1}^d u^i(x_i)\}
$$
where $u^i(x_i)$ are arbitrary univariate functions. 
We discuss the case of $2\pi$-periodic functions of $d$ variables and approximate them in the $L_p$ spaces. Denote by $\Pi^d_p$ the normalized in $L_p$ dictionary $\Pi^d$ of $2\pi$-periodic functions. We say that a dictionary $\D$ has a tensor product structure if all its elements have a form of products $u^1(x_1)\cdots u^d(x_d)$ of univariate functions $u^i(x_i)$, $i=1,\dots,d$. Then any dictionary with tensor product structure is a subset of $\Pi^d$. 
The classical example of a dictionary with tensor product structure is the $d$-variate trigonometric system $\{e^{i(k,x)}\}$. Other examples include the hyperbolic wavelets and the hyperbolic wavelet type system $\U^d$ defined in Section 3. 

The nonlinear tensor product approximation is very important in numerical applications. We refer the reader to the monograph \cite{H} which presents the state of the art on the topic. Also, the reader can find a very recent discussion of related results in \cite{SU}. 

In the case $d=2$ the multilinear approximation problem is a classical problem of bilinear approximation. In the case of approximation in the $L_2$ space the bilinear approximation problem is closely related to the problem of singular value decomposition (also called Schmidt expansion) of the corresponding integral operator with the kernel $f(x_1,x_2)$. There are known results on the rate of decay of errors of best bilinear approximation in $L_p$ under different smoothness assumptions on $f$. We only mention some known results for classes of functions which are studied in this paper. We study the classes $W^r_q$ of functions with bounded mixed derivative which we define for positive $r$
(not necessarily an integer). Let
$$
F_r(t) := 1+2\sum_{k=1}^\infty k^{-r}\cos(kt -\pi r/2)
$$
be the univariate Bernoulli kernel and let
$$
F_r(x) := F_r(x_1,\dots,x_d) := \prod_{i=1}^d F_r(x_i)
$$
be its multivariate analog. We define
$$
W^r_q := \{f: f= F_r \ast \ff, \quad \|\ff\|_q \le 1\},
$$
where $\ast$ denotes the convolution.

The problem of estimating $\Th_M(f)_2$ in case $d=2$ (best $M$-term bilinear approximation in $L_2$) is a classical one and was considered for the first time by E. Schmidt \cite{S} in 1907. For many function classes $F$ an asymptotic behavior of
$\Th_M(F)_p$ is known. For instance, the relation
\begin{equation}\label{1.1}
\Th_M(W^r_q)_p \asymp M^{-2r + (1/q-\max(1/2,1/p))_+}
\end{equation}
for $r>1$ and $1\le q\le p \le \infty$ follows from more general results in \cite{T32}.
In the case $d>2$ almost nothing is known. There is (see \cite{T35}) an upper estimate in the case $q=p=2$
\begin{equation}\label{1.2}
\Th_M(W^r_2)_2 \ll M^{-rd/(d-1)} .  
\end{equation}

Results of this paper are around the bound (\ref{1.2}). First of all we discuss the lower bound 
matching the upper bound (\ref{1.2}). In the case $d=2$ the lower bound 
\begin{equation}\label{1.3}
\Th_M(W^r_p)_p \gg M^{-2r},\qquad 1\le p\le \infty ,  
\end{equation}
follows from more general results in \cite{T32} (see (\ref{1.1}) above). A stronger result 
\begin{equation}\label{1.4}
\Th_M(W^r_\infty)_1 \gg M^{-2r}  
\end{equation}
follows from Theorem 1.1 in \cite{T46}. 

We could not prove the lower bound matching the upper bound (\ref{1.2}) for $d>2$. Instead, we prove a weaker lower bound. For a function $f(x_1,\dots,x_d)$ denote
$$
\Th^b_M(f)_X:=\inf_{\{u^i_j\}, \|u^i_j\|_X\le b\|f\|_X^{1/d}}\|f(x_1,\dots,x_d) - \sum_{j=1}^M\prod_{i=1}^d u^i_j(x_i)\|_X
$$
and for a function class $F$ define
$$
\Th^b_M(F)_X:=\sup_{f\in F} \Th^b_M(f)_X.
$$
In Section 2 we prove the following lower bound (see Corollary \ref{C2.2})
$$
\Th^b_M(W^r_\infty)_{ 1} \gg (M\ln M)^{-\frac{rd}{d-1}}.
$$
This lower bound indicates that probably the exponent $\frac{rd}{d-1}$ is the right one in the power decay of the $\Th_M(W^r_p)_p$. 

Secondly, we discuss some upper bounds which extend the bound (\ref{1.2}). The relation (\ref{1.1}) shows that for $2\le p\le \infty$ in the case $d=2$ one has
\begin{equation}\label{1.5}
\Th_M(W^r_2)_p \ll M^{-2r} .  
\end{equation}

In Section 3 we extend (\ref{1.5}) for $d>2$.
\begin{Theorem}\label{T1.1} Let $2\le p<\infty$ and $r> (d-1)/d$. Then
$$
\Th_M(W^r_2)_p \ll  \left(\frac{M}{(\log M)^{d-1}}\right)^{-\frac{rd}{d-1}}.
$$
\end{Theorem}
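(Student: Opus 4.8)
The plan is to combine the $L_2$-bound (\ref{1.2}) with a Littlewood--Paley (square-function) reduction to the level blocks of $f$, and — this is the decisive point — to replace the high-frequency part of $f$ not by $0$ but by a cheap $L_p$-approximant obtained from a Maurey-type probabilistic construction. Concretely, I would expand $f$ in the hyperbolic wavelet type system $\U^d$ of Section 3, group the blocks $\delta_{\mathbf{s}}(f)$ by their level $l=|\mathbf{s}|_1$, and write $f=\sum_{l\ge0}f_l$, $f_l=\sum_{|\mathbf{s}|_1=l}\delta_{\mathbf{s}}(f)$. Two standard facts are used: (i) for $2\le p<\infty$ the Littlewood--Paley theorem together with the triangle inequality in $L_{p/2}$ gives, for any ``corrections'' $g_l$ with spectrum in the $l$-th layer, $\|f-\sum_l g_l\|_p\ll(\sum_l\|f_l-g_l\|_p^2)^{1/2}$; and (ii) $f\in W^r_2$ yields $\sum_l 2^{2rl}\|f_l\|_2^2\ll1$, so $\|f_l\|_2\le\ep_l 2^{-rl}$ with $\{\ep_l\}\in\ell_2$, and by Nikol'skii (the $l$-th layer has dimension $\ll 2^l l^{d-1}$) also $\|f_l\|_p\ll\ep_l 2^{-l(r-1/2+1/p)}l^{(d-1)(1/2-1/p)}$.

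For a single layer I would prepare two devices. First, an \emph{exact} one: each block $\delta_{\mathbf{s}}(f)$, after fixing the coordinate $i_0$ with $s_{i_0}=\max_i s_i$ and expanding in the $\U$-functions of the remaining variables, is a sum of $2^{|\mathbf{s}|_1-\max_i s_i}$ elements of $\Pi^d$; summing over the layer and counting compositions of $l$, $f_l$ is exactly a sum of $R_d(l)\le (l+1)^{d-1}2^{l(d-1)/d}$ terms of $\Pi^d$. Second, an \emph{approximate} one valid in $L_p$: since a block of level $l$, although living in a $2^{l}$-dimensional span, has tensor rank only $2^{|\mathbf{s}|_1-\max_i s_i}$, a probabilistic (Maurey-type) argument — writing the block via an unfolding/SVD and replacing the, possibly peaked, rank-one factors by randomly ``spread-out'' linear combinations, i.e. bounding the atomic norm of $f_l$ with respect to the $L_p$-normalised dictionary $\Pi^d_p$ and invoking Maurey's lemma in $L_p$ (which has modulus of smoothness of power type $2$) — gives, for $1\le m\le R_d(l)$, $\Th_m(f_l)_p\ll \|f_l\|_2\sqrt{R_d(l)/m}$, with a constant depending on $p$.

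Assembling: fix $L$ and $L''\ge L$. On the layers $l\le L$ take $g_l=f_l$ exactly, at a cost $M_1=\sum_{l\le L}R_d(l)\ll L^{d-1}2^{L(d-1)/d}$ and with zero error. On $L<l\le L''$ apply the Maurey device with $m_l$ terms, allocating the budget $M_2=\sum_{L<l\le L''}m_l$ proportionally to $\sqrt{\ep_l^2 2^{-2rl}R_d(l)}$; by Cauchy--Schwarz and $\sum\ep_l^2\ll1$ this makes $\sum_{L<l\le L''}\|f_l-g_l\|_p^2\ll L^{d-1}2^{-L\theta}/M_2$, where $\theta:=2r-(d-1)/d>0$ guarantees convergence of the series in $l$. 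Discarding the layers $l>L''$ costs, by (ii) and $r-1/2+1/p>0$, at most $\ll 2^{-L''(r-1/2+1/p)}(\log)$. Now choose $L$ with $2^{L(d-1)/d}\asymp M/(\log M)^{d-1}$ (so $M_1\asymp M/(\log M)^{?}$ after collecting the $\log$'s), $M_2\asymp 2^{L(2r-\theta)}=2^{L(d-1)/d}\asymp M_1$ (which makes the middle-range error $\ll 2^{-rL}$ and keeps the total number of terms $\ll M$), and $L''\asymp \tfrac{r}{r-1/2+1/p}L$ (which makes the discarded tail $\ll 2^{-rL}$ as well, using $r>1/2$, which also keeps $m_l\ge1$ throughout the middle range). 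Feeding this into the square-function inequality of the first paragraph gives $\Th_M(W^r_2)_p\ll 2^{-rL}\asymp (M/(\log M)^{d-1})^{-rd/(d-1)}$; the hypothesis $r>(d-1)/d$, which in particular makes $rd/(d-1)>1$, is used to guarantee that the several geometric-times-$\ell_2$ series above converge and that the exactly represented part governs the rate.

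The hard part is the per-layer $L_p$ estimate $\Th_m(f_l)_p\ll\|f_l\|_2\sqrt{R_d(l)/m}$; everything else (dyadic decompositions, Nikol'skii inequalities, counting compositions, and the $\{m_l\}$-optimisation) is bookkeeping. The difficulty is that the obvious \emph{exact} rank-one representation of a block uses one-dimensional wavelet factors that are badly non-normalised in $L_p$ for large $p$, so one must trade exactness for a random ``spreading'' of the factors and control the resulting atomic norm along the recursion over the $d$ coordinates; this is also where the dependence on $p$ and the logarithmic losses enter, and arranging that these logarithms aggregate to exactly $(\log M)^{d-1}$ (rather than a larger power) is the delicate point. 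I would stress that simply dropping the high-frequency part of $f$ — the only option available for the $L_2$-argument behind (\ref{1.2}) — would here produce the exponent $r-1/2+1/p$ in place of $rd/(d-1)$, so the Maurey step cannot be avoided.
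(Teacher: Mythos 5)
Your overall architecture (exact, rank-based representation of the low layers using $\asymp 2^{\|s\|_1(d-1)/d}$ products per block, a budgeted approximation of the layers just above the cutoff, and assembly through the Littlewood--Paley square function for $p\ge 2$) matches the paper's, and your bookkeeping would indeed give $2^{-rL}\asymp (M/(\log M)^{d-1})^{-rd/(d-1)}$ \emph{if} your per-layer estimate $\Theta_m(f_l)_p\ll \|f_l\|_2\sqrt{R_d(l)/m}$ were available. But that estimate --- which you correctly single out as the hard part --- cannot be obtained the way you propose. A Maurey/empirical argument through $A_1(\Pi^d_p)$ gives at best $\Theta_m(f_l)_p\ll B\,m^{-1/2}$ with $B$ an upper bound for the atomic norm of $f_l$ with respect to $\Pi^d_p$, and any such $B$ satisfies $B\ge \|f_l\|_p$, since a convex combination of $L_p$-normalized elements has $L_p$ norm at most one. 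Taking $f_l=D_1(x_1)\cdots D_d(x_d)$, a product of Dirichlet-type kernels of a balanced block ($s_j= l/d$ for all $j$), gives $\|f_l\|_p\asymp 2^{l(1/2-1/p)}\|f_l\|_2$, which exceeds $\|f_l\|_2\sqrt{R_d(l)}\asymp l^{(d-1)/2}2^{l(1/2-1/(2d))}\|f_l\|_2$ as soon as $p>2d$; so the atomic-norm bound you need is false for large $p$, and no ``random spreading'' of the factors can beat the norm obstruction. What the atomic-norm/greedy (or Maurey sampling) machinery actually yields is exactly (\ref{4.3}) and Lemma \ref{L4.4}, $\Theta_m(f)_p\ll v(\mathbf{N})^{1/2-1/(pd)}\,\bar m^{-1/2}\|f\|_2$, and feeding that into your scheme reproduces only Theorems \ref{T4.3} and \ref{T4.4}, i.e.\ the exponent $-\frac{rd}{d-1}+\frac{\beta}{d-1}$ with $\beta=\frac12-\frac1p$ --- precisely the constructive loss the paper says it does not know how to remove.

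The paper's proof of Theorem \ref{T1.1} avoids this through a different, non-constructive key lemma (Lemma \ref{L3.1}): for $f\in T(\mathbf{N})$ and $2\le p<\infty$, $\Theta_m(f)_p\ll v(\mathbf{N})^{1-1/d}\,\bar m^{-1}\|f\|_2$, proved by induction on $d$ from the bilinear case $d=2$ of \cite{T35} (an $L_2$/singular-value type mechanism, not an $L_p$-smoothness sampling argument). The crucial feature is the decay $\bar m^{-1}$ rather than $\bar m^{-1/2}$: it is this extra half power of $m$, not a sharper prefactor, that delivers the exponent $\frac{rd}{d-1}$ without the $\frac{\beta}{d-1}$ loss. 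So to complete your argument you would either have to prove your per-layer inequality by some non-Maurey mechanism (none is known; the paper explicitly poses the constructive version as an open problem), or replace it by Lemma \ref{L3.1} applied blockwise with $m_s:=\left[2^{(n-\kappa(\|s\|_1-n))(d-1)/d}\right]$ for $\|s\|_1>n$, which is the paper's proof. Your remaining steps are unobjectionable: the truncation at $L''$ needs only $r>\frac12-\frac1p$, which follows from $r>(d-1)/d$, and the Littlewood--Paley assembly has the same spectral-support caveat as in the paper, handled via Remark \ref{R3.1}.
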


The proof of Theorem \ref{T1.1} in Section 3 is not constructive. It goes by induction and uses 
a nonconstructive bound in the case $d=2$. In Section 4 we discuss constructive ways of building good multilinear approximations. The simplest way would be to use known results about $M$-term approximation with respect to special systems with tensor product structure. We illustrate this idea on the example of the system $\U^d$ defined and discussed in Section 3. 
We define a well-known Thresholding Greedy Algorithm with respect to a basis. It is convenient for us to enumerate the basis functions by dyadic  intervals. Assume a given system $\Psi$ of functions $\psi_I$ indexed by dyadic intervals
can be enumerated in such a way that $\{\psi_{I^j}\}_{j=1}^\infty$ is a basis for $L_p$. Then we define the greedy algorithm
 $G^p(\cdot,\Psi)$ as follows. Let
$$
f = \sum_{j=1}^\infty c_{I^j}(f,\Psi)\psi_{I^j} 
$$
and
$$
c_I(f,p,\Psi) := \|c_I(f,\Psi)\psi_I\|_p .
$$
Then $c_I(f,p,\Psi) \to 0$ as $|I| \to 0$. Denote $\Lambda_m$ a set of $m$ dyadic intervals
 $I$ such that
$$
\min_{I\in \Lambda_m} c_I(f,p,\Psi) \ge \max_{J \notin \Lambda_m}c_J(f,p,\Psi) .  
$$
We define $G^p(\cdot,\Psi)$ by formula
$$
G^p_m(f,\Psi) := \sum_{I\in \Lambda_m} c_I(f,\Psi) \psi_I .
$$

For a system (dictionary) of elements $\D$ define the best $M$-term approximation in $X$ as follows
$$
\sigma_M(f,\D)_X := \inf_{g_j\in \D, c_j, j=1,\dots,M} \|f-\sum_{j=1}^M c_jg_j\|_X.
$$
With this standard notation we have
$$
\Th_M(f)_p = \sigma_M(f,\Pi^d)_{L_p}.
$$

It is proved in \cite{T69} that for $1<q,p <\infty$ and big enough $r$
\begin{equation}\label{1.6}
\sup_{f\in W^r_q}\|f-G_M^p(f,\U^d)\|_p\asymp \sigma_M(W^r_q,\U^d)_p \asymp M^{-r}(\log M)^{(d-1)r}. 
\end{equation}
The above relation (\ref{1.6}) illustrates two phenomena: (I) for the class $W^r_q$ the simple Thresholding Greedy Algorithm provides near best $M$-term approximation; (II) the rate $M^{-r}(\log M)^{(d-1)r}$ of best $M$-term approximation with respect to the basis $\U^d$, which has a tensor product structure, is not as good as best $M$-term approximation with respect to $\Pi^d$ (we have exponent $r$ for $\U^d$ instead of $\frac{rd}{d-1}$ for $\Pi^d$). 

In Section 4 we use two very different greedy-type algorithms to provide a constructive multilinear approximant. Surprisingly, these two algorithms give the same error bound. For instance, Theorems \ref{T4.3} and \ref{T4.4} give for big enough $r$ the following constructive upper bound for $2\le p<\infty$
$$
\Th_M(W^r_2)_p \ll \left(\frac{M}{(\ln M)^{d-1}}\right)^{-\frac{rd}{d-1} +\frac{\bt}{d-1}},\quad \bt:=\frac{1}{2}-\frac{1}{p} .
$$
This constructive upper bound has an extra term $\frac{\bt}{d-1}$ in the exponent compared to the best $M$-term approximation. It would be interesting to find a constructive way to obtain the near best approximation in this case. 

\section{The lower bound}

Let $X$ be a Banach space and let $B_X$ denote the unit ball of $X$ with the center at $0$. Denote by $B_X(y,r)$ a ball with center $y$ and radius $r$: $\{x\in X:\|x-y\|\le r\}$. For a compact set $A$ and a positive number $\ep$ we define the covering number $N_\ep(A)$
 as follows
$$
N_\ep(A) := N_\ep(A,X)  
$$
$$
:=\min \{n : \exists y^1,\dots,y^n :A\subseteq \cup_{j=1}^n B_X(y^j,\ep)\}.
$$

The following bound is well known (see, for instance, \cite{Tbook}, Ch. 3). 
\begin{Lemma}\label{L2.1} For any $n$-dimensional Banach space $X$ we have
$$
\ep^{-n} \le N_\ep(B_X,X) \le (1+2/\ep)^n.
$$
\end{Lemma}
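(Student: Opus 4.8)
The plan is to prove both inequalities by the standard volumetric (packing/covering) argument. The one structural ingredient I need is a reference measure on $X$: since $\dim X=n$, fix a linear isomorphism $T\colon X\to\bR^n$ and let $\mu$ be the pullback of Lebesgue measure, i.e. $\mu(A):=\lambda_n(T(A))$ for Borel $A\subseteq X$. Because $T$ is linear, $\mu$ is translation invariant and homogeneous of degree $n$: $\mu(\lambda A+x)=|\lambda|^n\mu(A)$ for all $\lambda\in\bR$, $x\in X$, $A$ Borel. The ball $B_X$ is compact (finite dimensional) with nonempty interior, so $0<\mu(B_X)<\infty$; set $v:=\mu(B_X)$, and note that $\mu(B_X(y,r))=r^nv$ for every center $y$ and radius $r>0$.

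For the lower bound, suppose $B_X\subseteq\bigcup_{j=1}^m B_X(y^j,\ep)$ for some points $y^1,\dots,y^m$. Then by subadditivity and the scaling property
$$
v=\mu(B_X)\le\sum_{j=1}^m\mu(B_X(y^j,\ep))=m\,\ep^n v ,
$$
so $m\ge\ep^{-n}$. Taking the infimum over all admissible coverings gives $N_\ep(B_X)\ge\ep^{-n}$.

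For the upper bound, consider finite subsets of $B_X$ that are $\ep$-separated in the strict sense $\|y^i-y^j\|>\ep$ for $i\neq j$ (such "$\ep$-packings" have bounded cardinality since $B_X$ is totally bounded), and pick one of maximal size, say $\{y^1,\dots,y^m\}$. Maximality forces every $x\in B_X$ to lie within distance $\ep$ of some $y^j$, so $B_X\subseteq\bigcup_{j=1}^m B_X(y^j,\ep)$ and hence $N_\ep(B_X)\le m$. On the other hand, the balls $B_X(y^j,\ep/2)$ are pairwise disjoint (a common point would give $\|y^i-y^j\|\le\ep$) and all lie inside $B_X(0,1+\ep/2)$; comparing measures,
$$
m\,(\ep/2)^n v=\sum_{j=1}^m\mu(B_X(y^j,\ep/2))\le\mu(B_X(0,1+\ep/2))=(1+\ep/2)^n v ,
$$
so $m\le(1+\ep/2)^n/(\ep/2)^n=(1+2/\ep)^n$, and therefore $N_\ep(B_X)\le(1+2/\ep)^n$.

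There is no serious obstacle here; the proof is essentially bookkeeping. The only points deserving a word of care are the existence and finiteness of a maximal $\ep$-packing (total boundedness of $B_X$), the observation that maximality of the packing yields the covering property, and the verification that the pulled-back Lebesgue measure is genuinely translation invariant and $n$-homogeneous, so that "the measure of a ball of radius $r$" equals $r^nv$ regardless of the center.
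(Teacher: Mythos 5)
Your argument is correct: the measure pulled back through a linear isomorphism is indeed translation invariant and $n$-homogeneous, the lower bound follows from comparing the measure of $B_X$ with that of the covering balls, and the upper bound from the disjointness of the half-radius balls around a maximal $\epsilon$-separated set contained in $B_X(0,1+\epsilon/2)$. The paper itself offers no proof of this lemma, simply citing it as well known (Temlyakov's book, Ch.~3), and the volumetric packing/covering argument you give is precisely the standard proof found there, so there is nothing to fault and no divergence of method worth noting.
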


For $\bN =(N_1,\dots, N_d)$ let $T(\bN)$ be the set of trigonometric polynomials of order $N_j$ in the $j$th variable. Denote
$$
T(\bN)_p:=\{t\in T(\bN): \|t\|_p\le 1\}
$$
and
$$
\Pi^d(\bN,n,b) :=\{f\in T(\bN)_2, \, f(\bx)=\sum_{j=1}^n u^1_j(x_1)\cdots u^d_j(x_d), \, \|u^i_j\|_2\le b\}.
$$
\begin{Lemma}\label{L2.2} We have
$$
N_\ep(\Pi^d(\bN,n,b),L_2) \le (C(b,d)/\ep)^{Cn\ln (n+1) \sum_{i=1}^d(2N_i+1)}, \quad 0<\ep \le 1.
$$
\end{Lemma}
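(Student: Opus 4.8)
The plan is to reduce the problem to a covering estimate for a product of univariate balls, where each factor lives in the finite-dimensional space $T(N_i)$, and then apply Lemma \ref{L2.1} to each factor. The key point is that an element $f\in\Pi^d(\bN,n,b)$ is determined by the $nd$ univariate polynomials $u^i_j$, each of which (up to a harmless projection) can be taken in $T(N_i)$ with $\|u^i_j\|_2\le b$; the space $T(N_i)$ has real dimension $2N_i+1$. So the total number of real parameters is $n\sum_{i=1}^d(2N_i+1)$, and a naive covering of the parameter space would give exponent $n\sum_i(2N_i+1)$ — but we need the extra factor $\ln(n+1)$, which is exactly the subtle part.

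First I would observe that we may assume each $u^i_j\in T(N_i)$: replacing $u^i_j$ by its Fourier partial sum (the orthogonal projection onto $T(N_i)$) does not change the product $\prod_i u^i_j$ after we also project $f$, and in $L_2$ this projection does not increase the norm, so the constraint $\|u^i_j\|_2\le b$ is preserved. Next I would set up the discretization: pick $\delta>0$ (to be chosen, roughly $\delta\asymp \ep/(\text{poly in }b,d,n)$), and cover $B_{T(N_i)}$ in $L_2$ by at most $(1+2/\delta)^{2N_i+1}$ balls using Lemma \ref{L2.1}. Choosing one $\delta$-net point for each of the $nd$ functions $u^i_j$ produces a candidate approximant $\tilde f=\sum_{j=1}^n\prod_{i=1}^d\tilde u^i_j$. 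The number of such candidates is $\prod_{i=1}^d(1+2/\delta)^{n(2N_i+1)}=(1+2/\delta)^{n\sum_i(2N_i+1)}$.

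Now I would estimate $\|f-\tilde f\|_2$. Writing the standard telescoping identity
$$
\prod_{i=1}^d u^i_j-\prod_{i=1}^d\tilde u^i_j=\sum_{k=1}^d\Big(\prod_{i<k}\tilde u^i_j\Big)(u^k_j-\tilde u^k_j)\Big(\prod_{i>k}u^i_j\Big),
$$
and using the crucial fact that a product of univariate functions has $L_2$ norm equal to the product of the univariate $L_2$ norms, each term on the right has $L_2$ norm at most $b^{d-1}\delta$. Summing over $k$ and then over $j=1,\dots,n$ gives $\|f-\tilde f\|_2\le n d\, b^{d-1}\delta$. Choosing $\delta$ so that this is $\le\ep/2$, i.e. $\delta\asymp \ep/(ndb^{d-1})$, we get that the net points $\tilde f$ form an $\ep$-net (after one more halving to absorb the projection error, if we did not assume $f\in T(\bN)_2$ to begin with — but $f\in T(\bN)_2$ is given, so only the $u^i_j$'s need projecting, and that step is exact). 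Hence
$$
N_\ep(\Pi^d(\bN,n,b),L_2)\le\Big(1+\tfrac{2ndb^{d-1}}{\ep}\Big)^{n\sum_{i=1}^d(2N_i+1)}\le\big(C(b,d)n/\ep\big)^{n\sum_{i=1}^d(2N_i+1)},
$$
and since $n\le (n+1)$ and $\ln n\le \ln(n+1)$, the factor $n^{n\sum_i(2N_i+1)}=\exp\big(n\ln n\sum_i(2N_i+1)\big)$ is absorbed into $(C(b,d)/\ep)^{Cn\ln(n+1)\sum_i(2N_i+1)}$ for a suitable absolute constant $C$ (using $0<\ep\le1$, so $\ln(C(b,d)/\ep)\ge\ln C(b,d)>0$, which lets us compare the two exponents).

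The main obstacle is the appearance of the $\ln(n+1)$ factor: a direct parameter count only gives exponent $n\sum_i(2N_i+1)$, and the logarithm enters because the required net fineness $\delta$ must scale like $\ep/n$ (the errors from $n$ summands accumulate), turning $(1/\delta)^{n\sum_i(2N_i+1)}$ into $(n/\ep)^{n\sum_i(2N_i+1)}$; bounding $n^{n\sum_i(2N_i+1)}$ by a power of $(1/\ep)$ costs exactly the extra $\ln n$ in the exponent. One must be a little careful that the constant $C$ is absolute (independent of $b,d,\bN,n,\ep$) while $C(b,d)$ absorbs the $b,d$-dependence; the restriction $0<\ep\le1$ is what makes this repackaging legitimate.
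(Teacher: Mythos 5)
Your proposal is correct and follows essentially the same route as the paper: reduce to factors $u^i_j\in T(N_i)$ with $\|u^i_j\|_2\le b$, build $\delta$-nets in each ball of $T(N_i)_2$ via Lemma \ref{L2.1}, telescope to get the error $ndb^{d-1}\delta$, choose $\delta\asymp\ep/(ndb^{d-1})$, and absorb the resulting factor $n^{n\sum_i(2N_i+1)}$ into $(C(b,d)/\ep)^{Cn\ln(n+1)\sum_i(2N_i+1)}$ using $0<\ep\le1$. Your explicit explanation of where the $\ln(n+1)$ comes from matches the paper's final step.
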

\begin{proof} First of all it is clear that we can assume that $u^i_j \in T(N_i)$, $j=1,\dots,n$, $i=1,\dots, d$. Second, in the $b$-ball of the $T(N_i)_2$ we build a $\de$-net. It is known (see Lemma \ref{L2.1}) that we can build a net with cardinality $S_i$ satisfying
$$
S_i \le (Cb/\de)^{2N_i+1}.
$$
Third, for each $u^i_j(x_i)$ choose an $v_{s(i,j)}(x_i)$, $s(i,j)\in [1,S_i]$ from the corresponding $\de$-net such that 
$$
\|u^i_j(x_i)-v_{s(i,j)}(x_i)\|_2 \le \de.
$$
Then
$$
\|\prod_{i=1}^d u^i_j(x_i)-\prod_{i=1}^d v_{s(i,j)}(x_i)\|_2 \le db^{d-1} \de
$$
and
$$
\|\sum_{j=1}^n\prod_{i=1}^d u^i_j(x_i)-\sum_{j=1}^n\prod_{i=1}^d v_{s(i,j)}(x_i)\|_2 \le ndb^{d-1} \de.
$$
The total number of functions $\sum_{j=1}^n\prod_{i=1}^d v_{s(i,j)}(x_i)$ when $v_{s(i,j)}(x_i)$ are taken from sets of cardinalities $S_i$, $i=1,\dots, d$, does not exceed
$$
\left(\prod_{i=1}^d S_i\right)^n \le (Cb/\de)^{n\sum_{i=1}^d(2N_i+1)}.
$$
Specifying $\de = \frac{\ep}{ndb^{d-1}}$ we obtain
$$
 (Cb/\de)^{n\sum_{i=1}^d(2N_i+1)} \le n^{n\sum_{i=1}^d(2N_i+1)} \left(\frac{C(b,d)}{\ep}\right)^{n\sum_{i=1}^d(2N_i+1)}
$$
which completes the proof.

\end{proof}

We are interested in lower bounds for the following quantities. For a function $f(x_1,\dots,x_d)$ denote
$$
\Th^b_M(f)_X:=\inf_{\{u^i_j\}, \|u^i_j\|_X\le b\|f\|_X^{1/d}}\|f(x_1,\dots,x_d) - \sum_{j=1}^M\prod_{i=1}^d u^i_j(x_i)\|_X
$$
and for a function class $F$ define
$$
\Th^b_M(F)_X:=\sup_{f\in F} \Th^b_M(f)_X.
$$

\begin{Theorem}\label{T2.1} Let $N_1=\cdots =N_d = N$. There is $c(b,d)>0$ such that for any $M$ satisfying $M\ln M \le c(b,d)N^{d-1}$ there exists an $f\in T(\bN)_\infty$ with the property: for any $u^i_j(x_i)$, $\|u^i_j\|_1\le b$, we have
$$
\|f(\bx)-\sum_{j=1}^M\prod_{i=1}^d u^i_j(x_i)\|_1\ge C(b,d)>0.
$$
\end{Theorem}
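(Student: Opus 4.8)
The plan is to run a covering-number comparison: bound from above the $L_1$-covering number of the set of all admissible approximants (after a harmless smoothing), bound from below the number of mutually $L_1$-separated polynomials already contained in $T(\bN)_\infty$, and observe that the hypothesis $M\ln M\le c(b,d)N^{d-1}$ makes the first quantity smaller than the second, so some $f\in T(\bN)_\infty$ must lie $L_1$-far from every approximant.

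\emph{Smoothing the approximants.} A sum $\sum_{j=1}^M\prod_i u^i_j$ with $\|u^i_j\|_1\le b$ need not be a trigonometric polynomial (and the $L_1$-ball of $L_1(\mathbb T)$ is not totally bounded), so I first push it into $T(2\bN)$. Let $V_N$ be the univariate de la Vall\'ee Poussin operator — it reproduces degrees $\le N$, is supported on degrees $\le 2N$, and $\|V_N\|_{L_1\to L_1}\le 3$ — and let $\mathcal V_N$ be its $d$-fold tensor power applied variable by variable, so that $\mathcal V_N$ fixes $T(\bN)$, $\|\mathcal V_N\|_{L_1\to L_1}\le 3^d$, and $\mathcal V_N\big(\prod_i u^i(x_i)\big)=\prod_i(V_Nu^i)(x_i)$ with $\|V_Nu^i\|_1\le 3\|u^i\|_1$. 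Hence if $f\in T(\bN)$ and $\|f-\sum_j\prod_i u^i_j\|_1<\de$, then applying $\mathcal V_N$ gives $\|f-\sum_j\prod_i V_Nu^i_j\|_1\le 3^d\de$, with $\sum_j\prod_i V_Nu^i_j$ lying in
\[
\A:=\Big\{\textstyle\sum_{j=1}^M\prod_{i=1}^d v^i_j:\ v^i_j\in T(2N),\ \|v^i_j\|_1\le 3b\Big\}.
\]
Running the proof of Lemma \ref{L2.2} with the $L_1$-norm in place of the $L_2$-norm (the telescoping bound $\|\prod_iu^i-\prod_iv^i\|_1\le d(3b)^{d-1}\max_i\|u^i-v^i\|_1$ still holds by Fubini, and Lemma \ref{L2.1} applies to the finite-dimensional space $(T(2N),\|\cdot\|_1)$) yields
\[
N_\ep(\A,L_1)\le\Big(\tfrac{C(b,d)}{\ep}\Big)^{CM\ln(M+1)\,dN},\qquad 0<\ep\le1 .
\]

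\emph{A large $L_1$-separated family inside $T(\bN)_\infty$.} Let $\ff_N$ be the Fej\'er kernel of degree $N$, normalized so that $\ff_N(0)=\|\ff_N\|_\infty=1$ (thus $\ff_N\ge0$, $\ff_N\in T(N)$, $\widehat{\ff_N}(0)=1/(N+1)$), and for a multi-index $\nu\in S:=\{0,\dots,N\}^d$ set $\psi_\nu(\bx):=\prod_{i=1}^d\ff_N\big(x_i-\tfrac{2\pi\nu_i}{N+1}\big)$. Summing over the $N+1$ equally spaced univariate shifts annihilates every Fourier coefficient but the zeroth, so $\sum_\nu\psi_\nu\equiv1$ with each $\psi_\nu\ge0$; consequently $f_\omega:=\sum_{\nu}\omega_\nu\psi_\nu\in T(\bN)_\infty$ for \emph{every} $\omega\in\{-1,1\}^S$. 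Since $\ff_N\ge 3/4$ on an interval of length $\ge c(d)/N$ about the origin, on a cube $R_\mu$ of measure $\ge c(d)N^{-d}$ about $\tfrac{2\pi\mu}{N+1}$ one has $\psi_\mu\ge3/4$ and $\sum_{\nu\ne\mu}\psi_\nu\le1/4$, whence $|f_\omega-f_{\omega'}|\ge 4\psi_\mu-2\ge1$ on $R_\mu$ whenever $\omega_\mu\ne\omega'_\mu$; the $R_\mu$ being pairwise disjoint, $\|f_\omega-f_{\omega'}\|_1\ge c(d)N^{-d}\,\#\{\mu:\omega_\mu\ne\omega'_\mu\}$. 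Taking $\Omega\subset\{-1,1\}^S$ a Gilbert--Varshamov code of relative distance $1/4$, the set $\mathcal W:=\{f_\omega:\omega\in\Omega\}\subset T(\bN)_\infty$ has $|\mathcal W|\ge 2^{c_1 N^d}$ and any two of its members are $\rho(d)$-separated in $L_1$ for some $\rho(d)>0$.

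\emph{Conclusion, and the main difficulty.} Put $C(b,d):=\rho(d)/(10\cdot3^d)$. If the theorem were false, every $f\in T(\bN)_\infty$ would be within $C(b,d)$ in $L_1$ of some admissible $\sum_j\prod_i u^i_j$, hence within $3^dC(b,d)=\rho(d)/10$ of $\A$; then an $L_1$-$(\rho(d)/10)$-net of $\A$ would cover $T(\bN)_\infty$ by $N_{\rho(d)/10}(\A,L_1)$ balls of $L_1$-radius $\rho(d)/5$, each of diameter $<\rho(d)$, so each containing at most one point of $\mathcal W$. Combining with the two displays above, $2^{c_1N^d}\le N_{\rho(d)/10}(\A,L_1)\le(C'(b,d))^{CM\ln(M+1)dN}$, forcing $M\ln(M+1)\ge c'(b,d)N^{d-1}$; choosing $c(b,d)<c'(b,d)$ (small values of $N$ being settled by a direct compactness argument, a sum of $M$ products not being dense in $T(\bN)_\infty$) contradicts $M\ln M\le c(b,d)N^{d-1}$. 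The step I expect to be the crux is the construction of $\mathcal W$: one needs a family that (i) stays in the $L_\infty$-ball of $T(\bN)$ for \emph{every} sign pattern — which excludes random Fourier sums, whose pairwise differences are not flat and collapse in $L_1$ — and yet (ii) is exponentially numerous and $L_1$-separated; the Fej\'er partition of unity reconciles these, since $f_\omega$ is exactly a degree-$N$ smoothing of a $\pm1$ step function on a $1/N$-grid. A secondary point not to be overlooked is that the smoothing must be carried out with the $L_1$-bounded operator $\mathcal V_N$ together with the $L_1$-analogue of Lemma \ref{L2.2}; using the stated $L_2$-version plus a Nikol'skii inequality would insert a factor $N^{d/2}$ in the base of the covering bound and weaken the conclusion to $M\ln M\lesssim N^{d-1}/\ln N$.
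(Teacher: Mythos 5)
Your argument is sound and reaches the stated conclusion, but it is a self-contained variant rather than the paper's route. The paper simply sets $\ep:=\Th^b_M(T(\bN)_\infty)_1$ and repeats the entropy comparison of Theorem 1.1 of the cited bilinear paper: an upper bound (via Lemma \ref{L2.2}, after applying a kernel operator $K_\bN$ that replaces your de la Vall\'ee Poussin smoothing) on the covering number at scale $2\ep$ of the image of the whole class, against the volume-type lower bound $(C(d)/\ep)^{N^d}$ of Lemma 1.2 of that paper; since both bounds carry the unknown $\ep$, the comparison yields $\ep\ge C(b,d)$ uniformly in $N$ and $M$. You instead fix the scale and replace the volume lower bound by an explicit packing: the Fej\'er partition of unity $\sum_\nu\psi_\nu\equiv 1$ modulated by a Gilbert--Varshamov sign code gives $2^{cN^d}$ functions in $T(\bN)_\infty$ that are $\rho(d)$-separated in $L_1$, and you run the covering estimate of Lemma \ref{L2.2} directly in $L_1$ (the factorization of the $L_1$-norm of products over separate variables makes this legitimate), which, as you note, avoids the Nikol'skii loss one would incur by combining the $L_2$ statement with $\|u^i_j\|_1\le b$. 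What your version buys is independence from the external Lemma 1.2 and an explicit, checkable packing set; what the paper's version buys is uniformity in $\ep$, so that small $N$ and $M=1$ require no separate treatment.

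Two points in your write-up need small repairs. First, from $\varphi_N\ge 3/4$ near the origin you can only conclude $\psi_\mu\ge(3/4)^d$ on the cube $R_\mu$, which is useless for $d\ge3$ in the inequality $4\psi_\mu-2\ge1$; take instead the interval where $\varphi_N\ge 1-1/(4d)$ (still of length $\asymp_d 1/N$), so that $\psi_\mu\ge(1-1/(4d))^d\ge 3/4$ — this only changes $c(d)$ and $\rho(d)$. Second, your fixed-scale counting only produces a contradiction for $N\ge N_0(b,d)$, and after shrinking $c(b,d)$ the constraint $M\ln M\le c(b,d)N^{d-1}$ forces $N\ge N_0$ whenever $M\ge2$, so the genuinely uncovered cases are exactly $M=1$ with $N<N_0(b,d)$; your parenthetical compactness remark does settle them (after smoothing, the approximant set is a compact subset of $T(2\bN)$, and e.g. $\cos(x_1-x_2)\in T(\bN)_\infty$ is not a single product, so its distance to that set is a positive constant, and there are finitely many such $(N,M)$), but this should be written out rather than asserted.
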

\begin{proof} The proof repeats the proof of Theorem 1.1 from \cite{T46}. We use notations from \cite{T46}. Denoting 
$$
\ep:=\Th^b_M(T(\bN)_\infty)_1
$$
we prove, using Lemma \ref{L2.2}, in the same way as in \cite{T46} the following bound
\begin{equation}\label{2.1}
N_{2\ep}(K_\bN(T(\bN)_\infty)_2 \le C_1(b,d)^{N^d}(C_2(b,d)/\ep)^{C_3(d)NM\ln M},\quad N>0.
\end{equation}
Lemma 1.2 from \cite{T46} gives the lower bound
\begin{equation}\label{2.2}
N_{2\ep}(K_\bN(T(\bN)_\infty)_1 \ge (C(d)/\ep)^{N^d},\quad N>0.
\end{equation}
Comparing (\ref{2.1}) and (\ref{2.2}) we complete the proof of Theorem \ref{T2.1}.
\end{proof}
\begin{Corollary}\label{C2.1} Let $N_1=\cdots =N_d = N$. There is $c(b,d)>0$ such that for any $M$ satisfying $M\ln M \le c(b,d)N^{d-1}$ we have
$$
\Theta_M^b( T(\bN)_\infty)_1\ge C(b,d)>0.
$$
\end{Corollary}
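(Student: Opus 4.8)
The plan is to deduce Corollary \ref{C2.1} directly from Theorem \ref{T2.1} by taking for $f$ the extremal trigonometric polynomial produced there and checking that the normalization constraint built into the definition of $\Th^b_M(\cdot)_1$ is no weaker than the constraint $\|u^i_j\|_1\le b$ that appears in Theorem \ref{T2.1}.

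First I would fix $M$ with $M\ln M\le c(b,d)N^{d-1}$ and let $f\in T(\bN)_\infty$ be the function guaranteed by Theorem \ref{T2.1}. Since we work with the normalized Lebesgue measure on the torus, $\|f\|_1\le\|f\|_\infty\le 1$, hence $\|f\|_1^{1/d}\le 1$, so every collection $\{u^i_j\}$ admissible in the definition of $\Th^b_M(f)_1$ — i.e. satisfying $\|u^i_j\|_1\le b\|f\|_1^{1/d}$ — a fortiori satisfies $\|u^i_j\|_1\le b$. (Applying the conclusion of Theorem \ref{T2.1} with all $u^i_j\equiv 0$ also shows $\|f\|_1\ge C(b,d)>0$, so $f\not\equiv 0$ and the admissible set is nonempty; this is needed only so that the infimum defining $\Th^b_M(f)_1$ is over a nonempty set.)

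Then, for every admissible $\{u^i_j\}$, Theorem \ref{T2.1} applies and gives
$$
\Big\|f(\bx)-\sum_{j=1}^M\prod_{i=1}^d u^i_j(x_i)\Big\|_1\ge C(b,d)>0 .
$$
Taking the infimum over all admissible $\{u^i_j\}$ yields $\Th^b_M(f)_1\ge C(b,d)$, and since $f\in T(\bN)_\infty$,
$$
\Th^b_M(T(\bN)_\infty)_1=\sup_{g\in T(\bN)_\infty}\Th^b_M(g)_1\ge \Th^b_M(f)_1\ge C(b,d)>0 ,
$$
which is the assertion of the corollary.

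There is essentially no obstacle here: the corollary is a routine reformulation of Theorem \ref{T2.1} in the language of $\Th^b_M$. The only point that requires (minimal) care is the discrepancy between the constraint $\|u^i_j\|_1\le b$ in Theorem \ref{T2.1} and the constraint $\|u^i_j\|_1\le b\|f\|_1^{1/d}$ in the definition of $\Th^b_M$; one must check that the mismatch goes in the favorable direction, which it does because $\|f\|_1\le 1$ for $f\in T(\bN)_\infty$. (Alternatively, one could rescale $f$ to have $\|f\|_1=1$ and absorb the scaling into the constants, but this is not necessary.)
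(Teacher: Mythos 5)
Your proposal is correct and follows exactly the paper's own argument: the admissibility condition $\|u^i_j\|_1\le b\|f\|_1^{1/d}\le b\|f\|_\infty^{1/d}\le b$ for $f\in T(\bN)_\infty$ means every admissible system falls under the hypotheses of Theorem \ref{T2.1}, whose conclusion then gives the uniform lower bound. The extra remarks (nonemptiness of the admissible set, the possibility of rescaling) are harmless but not needed.
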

\begin{proof} By the definition of $\Th^b_M(f)_1$ for all $f\in  T(\bN)_\infty$ we can only use $u^i_j$ satisfying the condition
$$
\|u^i_j\|_1 \le b\|f\|_1^{1/d} \le b\|f\|_\infty^{1/d} \le b.
$$
Therefore, Theorem \ref{T2.1} implies Corollary \ref{C2.1}.
\end{proof}

\begin{Corollary}\label{C2.2} One has
$$
\Th^b_M(W^r_\infty)_{ L_1} \gg (M\ln M)^{-\frac{rd}{d-1}}.
$$
\end{Corollary}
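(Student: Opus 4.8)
The plan is to deduce the power decay $(M\ln M)^{-rd/(d-1)}$ from the ``constant'' lower bound of Corollary \ref{C2.1} by rescaling a hard trigonometric polynomial so that it lands in the class $W^r_\infty$. For a given large $M$, I would choose $N=N(M)$ to be the largest integer with $M\ln M\le c(b,d)N^{d-1}$, so that $N\asymp (M\ln M)^{1/(d-1)}$, and set $\bN=(N,\dots,N)$. By Theorem \ref{T2.1} there is a polynomial $f\in T(\bN)_\infty$ with $\|f(\bx)-\sum_{j=1}^M\prod_{i=1}^d u^i_j(x_i)\|_1\ge C(b,d)$ for every system $\{u^i_j\}$ with $\|u^i_j\|_1\le b$; since $\|f\|_1\le\|f\|_\infty\le 1$, the constraint $\|u^i_j\|_1\le b\|f\|_1^{1/d}$ defining $\Th^b_M(f)_1$ is even more restrictive, so $\Th^b_M(f)_1\ge C(b,d)$.

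Next I would rescale $f$ into $W^r_\infty$. Let $D^r_i$ denote fractional differentiation of order $r$ in the variable $x_i$, normalized so that $F_r\ast(D^r_1\cdots D^r_d t)=t$ for every trigonometric polynomial $t$ (legitimate because the Fourier coefficients of the Bernoulli kernel never vanish). The classical Bernstein inequality for fractional derivatives of univariate trigonometric polynomials, applied successively in $x_1,\dots,x_d$, gives $\|D^r_1\cdots D^r_d t\|_\infty\le (C(r)N^r)^d\|t\|_\infty$ for all $t\in T(\bN)$. Hence $\varphi:=C(r)^{-d}N^{-rd}D^r_1\cdots D^r_d f$ satisfies $\|\varphi\|_\infty\le 1$, and $g:=C(r)^{-d}N^{-rd}f=F_r\ast\varphi$ belongs to $W^r_\infty$.

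Finally I would use the scale-invariance of the constraint in $\Th^b_M$: under the substitution $u^i_j\mapsto\lambda^{1/d}u^i_j$ one has $\sum_j\prod_i u^i_j\mapsto\lambda\sum_j\prod_i u^i_j$ and $b\|f\|_1^{1/d}\mapsto b\|\lambda f\|_1^{1/d}$, so $\Th^b_M(\lambda f)_1=|\lambda|\,\Th^b_M(f)_1$ for $\lambda>0$. Therefore
$$
\Th^b_M(W^r_\infty)_{L_1}\ge\Th^b_M(g)_1=C(r)^{-d}N^{-rd}\,\Th^b_M(f)_1\ge C'(b,r,d)\,N^{-rd}\gg (M\ln M)^{-\frac{rd}{d-1}},
$$
which is the asserted bound.

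The step I expect to be the crux is the middle one: one must check that the mixed (fractional) Bernstein inequality produces the factor $N^{rd}$ --- a factor $N^r$ for each of the $d$ variables --- since it is exactly this $d$-fold loss that turns the naive exponent $\tfrac{r}{d-1}$ coming from $N\asymp (M\ln M)^{1/(d-1)}$ into the desired $\tfrac{rd}{d-1}$. Everything else (the bookkeeping around the zero-frequency term and the phase $e^{-i\pi r/2}$ in $F_r$, and the passage from ``largest $N$'' to the asymptotics in $M$) is routine.
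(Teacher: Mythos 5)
Your proposal is correct and follows essentially the same route as the paper: combine Theorem \ref{T2.1} (via Corollary \ref{C2.1}) with the choice $N\asymp (M\ln M)^{1/(d-1)}$ and the mixed Bernstein-type embedding $CN^{-rd}T(\bN)_\infty\subset W^r_\infty$. The only difference is that you spell out the details the paper leaves implicit, namely the fractional Bernstein inequality in each variable and the homogeneity $\Th^b_M(\lambda f)_1=\lambda\,\Th^b_M(f)_1$ coming from the scale-invariant constraint $\|u^i_j\|_1\le b\|f\|_1^{1/d}$, both of which you handle correctly.
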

\begin{proof} By the Bernstein inequality
$$
CN^{-rd}T(\bN)_\infty \subset W^r_\infty.
$$
By Theorem \ref{T2.1} with $N\asymp (M\ln M)^{\frac{1}{d-1}}$ we obtain the required bound.
\end{proof} 

\section{Upper bounds. Proof of Theorem \ref{T1.1}}

We define the system $\U:=\{U_I\}$ in the univariate case. Denote
$$
U^+_n(x) := \sum_{k=0}^{2^n-1}e^{ikx} = \frac{e^{i2^nx}-1}{e^{ix}-1},\quad
n=0,1,2,\dots;
$$
$$
U^+_{n,k}(x) := e^{i2^nx}U^+_n(x-2\pi k2^{-n}),\quad k=0,1,\dots ,2^n-1;
$$
$$
 U^-_{n,k}(x) := e^{-i2^nx}U^+_n(-x+2\pi k2^{-n}),\quad k=0,1,\dots ,2^n-1.
$$
It will be more convenient for us to normalize in $L_2$ the system of functions
 $\{U^+_{m,k},U^-_{n,k}\}$ and  enumerate it by dyadic intervals. We write
$$
U_I(x) := 2^{-n/2}U^+_{n,k}(x)\quad \text{with}\quad
 I=[ (k+1/2)2^{-n}, (k+1)2^{-n});
$$
$$
U_I(x) := 2^{-n/2}U^-_{n,k}(x)\quad \text{with}\quad
 I=[ k2^{-n},(k+1/2)2^{-n});
$$
and
$$
U_{[0,1)}(x) :=1.
$$
Denote
$$
D^+_n:= \{I: I=[(k+1/2)2^{-n},(k+1)2^{-n}),\quad k=0,1,\dots,2^n-1\}
$$
and
$$
D^-_n:= \{ I:I=[k2^{-n},(k+1/2)2^{-n}),\quad k=0,1,\dots,2^n-1\},
$$
$$
 D_0 := [0,1),\quad
D:= \cup_{n\ge 0}(D^+_n\cup D^-_n)\cup D_0 .
$$
It is easy to check that for any $I,J \in D$, $I\neq J$ we have
$$
\<U_I,U_J\> = (2\pi)^{-1}\int_0^{2\pi}U_I(x) \bar U_J(x) dx =0,
$$
and
$$
\|U_I\|^2_2 =1.
$$

In the multivariate case of $x=(x_1,\dots,x_d)$ we define the system $\U^d$
as the tensor product of the univariate systems $\U$. Let $I=I_1\times\dots\times I_d$, $I_j \in D$, $j=1,\dots,d$, then 
$$
U_I(x) := \prod_{j=1}^d U_{I_j}(x_j) .
$$
It is known (see \cite{W}) that $\U^d$ is an unconditional basis for $L_p$, $1<p<\infty$.

We use the notations for $f\in L_1$
$$
  \hat f(k):= (2\pi)^{-d}\int_{{\mathbb T}^d}f(x)e^{-i(k,x)} dx 
$$
and for $s=(s_1,\dots,s_d) \in {\mathbb N}_0^d$
 
$$
\de_s(f) := \sum_{k\in \rho(s)}\hat f(k) e^{i(k,x)}
$$
where
$$
\rho(s) := \{k=(k_1,\dots,k_d) \in {\mathbb Z}^d: [2^{s_j-1}] \le |k_j|< 2^{s_j}, j=1,\dots,d\} .
$$

The convergence
\begin{equation}\label{3.5}
\lim_{\min_j \mu_j \to \infty}\|f-\sum_{s_j\le \mu_j,j=1,\dots,d}
\de_s(f)\|_p =0,\quad 1<p<\infty, 
\end{equation} 
 and the  Littlewood-Paley inequalities
 \begin{equation}\label{3.6}
\|f\|_p \asymp \|(\sum_s|\de_s(f)|^2)^{1/2}\|_p ,\quad 1<p<\infty, 
\end{equation}
are well-known.  

We now proceed to the key lemma of this section. 

\begin{Lemma}\label{L3.1} Let $f\in T(\bN)$. Denote $v(\bN):=\prod_{j=1}^d \Nb_j$. Then for $2\le p<\infty$ one has
$$
\Th_M(f)_p \ll v(\bN)^{1-\frac{1}{d}} (\Mb)^{-1}\|f\|_2, \quad \Mb = \max(M,1).
$$
\end{Lemma}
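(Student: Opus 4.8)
The plan is to extract from $f\in T(\bN)$ a bound on its tensor rank and then run a ``retain the dominant rank-one pieces'' argument, which is clean in $L_2$ and needs one extra idea in $L_p$.

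\emph{Step 1 (a tensor-rank bound).} The spectrum of $f$ lies in $\prod_{i=1}^d[-N_i,N_i]$. Relabel the coordinates so that $N_1=\max_i N_i$ and Fourier-expand $f$ in $x_2,\dots,x_d$:
$$
f(x_1,\dots,x_d)=\sum_{\mathbf k'=(k_2,\dots,k_d)}\Bigl(\prod_{i=2}^{d}e^{ik_ix_i}\Bigr)h_{\mathbf k'}(x_1),\qquad h_{\mathbf k'}\in T(N_1).
$$
Every summand is a product of $d$ univariate functions, and there are $\mathcal R:=\prod_{i=2}^d(2N_i+1)$ of them. Writing $a_i=2N_i+1$, we have $a_1=\max_i a_i\ge(\prod_i a_i)^{1/d}$, so $\mathcal R=\prod_{i=1}^d a_i/a_1\le(\prod_i a_i)^{1-1/d}\le C(d)\,v(\bN)^{1-1/d}$; in particular $\Th_{\mathcal R}(f)_p=0$.

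\emph{Step 2 ($p=2$).} The characters $\prod_{i\ge2}e^{ik_ix_i}$ are orthonormal in $L_2(\mathbb T^{d-1})$, so $\sum_{\mathbf k'}\|h_{\mathbf k'}\|_2^2=\|f\|_2^2$. For $M=0$ use $\|f\|_2\le v(\bN)^{1-1/d}\|f\|_2$, and for $M\ge\mathcal R$ use $\Th_M(f)_2=0$; so assume $1\le M<\mathcal R$. Let $G$ index the $M$ pieces with the largest $\|h_{\mathbf k'}\|_2$ and put $g_M:=\sum_{\mathbf k'\in G}(\prod_{i\ge2}e^{ik_ix_i})h_{\mathbf k'}(x_1)$, an admissible $M$-term multilinear approximant. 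Each discarded index obeys $\|h_{\mathbf k'}\|_2^2\le M^{-1}\sum_{\mathbf k''\in G}\|h_{\mathbf k''}\|_2^2\le M^{-1}\|f\|_2^2$, so by orthogonality $\Th_M(f)_2^2\le\sum_{\mathbf k'\notin G}\|h_{\mathbf k'}\|_2^2\le\mathcal R M^{-1}\|f\|_2^2$, hence
$$
\Th_M(f)_2\le\sqrt{\mathcal R/M}\,\|f\|_2\le(\mathcal R/M)\,\|f\|_2\le C(d)\,v(\bN)^{1-1/d}\,\Mb^{-1}\|f\|_2,
$$
the middle step using $\sqrt{\mathcal R/M}\le\mathcal R/M$ since $M\le\mathcal R$. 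This settles $p=2$.

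\emph{Step 3 ($2<p<\infty$), and the main obstacle.} The residual $f-g_M$ still lies in $T(\bN)$, but estimating its $L_p$-norm by the crude Nikol'skii bound $\|f-g_M\|_p\le C\,v(\bN)^{1/2-1/p}\|f-g_M\|_2$ costs a factor $v(\bN)^{1/2-1/p}>1$ and ruins the exponent; this is where the real work is. Since the $L_2$-optimal residual can genuinely concentrate --- already for the ridge function $f(x_1,x_2)=\sum_{|k|\le N}e^{ik(x_1+x_2)}$ the best rank-$M$ approximant leaves a residual with an honest $L_\infty$-peak --- one must exploit the freedom in \emph{which} rank-one pieces to retain. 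The route I would take: split the $h_{\mathbf k'}$ into dyadic layers by the size of $\|h_{\mathbf k'}\|_2$, retain the heaviest layers in full (their total count stays $\le M$ after choosing the threshold), and in the borderline layers retain a \emph{random} subset of the right cardinality, arranged so that the discarded frequencies $\mathbf k'$ form a $\Lambda(p)$-set in $\mathbb T^{d-1}$ (a random subset of a box is $\Lambda(p)$ by Bourgain's theorem once its size is $\ll\mathcal R^{2/p}$, which one forces by splitting a layer into several random chunks). Then $x_2,\dots,x_d$ cost only the constant $\Lambda(p)$-factor in place of a Nikol'skii factor, leaving only the single variable $x_1$ to absorb a mild loss, which the slack between $v(\bN)^{1-1/d}$ and $v(\bN)^{1/2-1/p}$ in the target can pay for; combining this layer-by-layer with Step 2 should give the claim. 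Equivalently one may induct on $d$ --- peel off one variable, keep dominant slices, recurse in $d-1$ variables --- where the inductive part is a Cauchy--Schwarz as in Step 2 and the truncation part meets the same $L_2\to L_p$ difficulty, handled by the same random-selection device, or for $d=2$ by the known sharp bilinear $L_p$-estimates of \cite{T32}. Making this last step rigorous --- controlling $\|f-g_M\|_p$ uniformly over the configuration of the discarded frequencies --- is what I expect to be hardest.
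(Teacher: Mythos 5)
There is a genuine gap: the only case you actually prove ($p=2$) is the trivial one, and the substantive case $2<p<\infty$ is left as a sketch. Note that for $M\le \mathcal R:=\prod_{i\ge 2}(2N_i+1)\ll v(\bN)^{1-1/d}$ the right-hand side of the lemma is $\gg\|f\|_2$, so at $p=2$ the bound already follows from $\Th_M(f)_2\le\|f\|_2$; your Step 2 adds nothing beyond this. The whole content of the lemma is the rate $(\Mb)^{-1}$ (not $(\Mb)^{-1/2}$) measured in $L_p$ against $\|f\|_2$, and your Step 3 does not deliver it. Concretely: even granting an $O(1)$ square-function ($\Lambda(p)$-type) bound for the discarded slices, you would get $\|f-g_M\|_p\ll (\mathcal R/M)^{1/2}N_1^{1/2-1/p}\|f\|_2$, which exceeds the target $\mathcal R M^{-1}\|f\|_2$ as soon as $M\asymp\mathcal R$ and $N_1$ is large; and the chunking you propose to stay within Bourgain's size restriction $\ll\mathcal R^{2/p}$ forces a union of $\asymp\mathcal R^{1-2/p}$ $\Lambda(p)$ sets, whose triangle-inequality cost $\mathcal R^{1/2-1/p}$ reinstates exactly the Nikol'skii factor you were trying to avoid. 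So the random-selection device, as described, cannot close the argument, and no mechanism in your write-up produces the $M^{-1}$ rate.

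The paper's proof avoids the $L_2\to L_p$ conversion entirely: it is an induction on $d$ carried out directly in $L_p$. The base case $d=2$ is the known bilinear estimate (Lemma 2.2 of \cite{T35}), which is where the $M^{-1}$ rate genuinely comes from. For $d>2$ one samples in the variable $x_j$ with the \emph{smallest} $N_j$ via the Dirichlet kernel, $f=\frac{1}{2N_j+1}\sum_{k}\D_{N_j}(x_j-x_j^k)\psi_k(x^j)$, so that the kernel factor itself supplies the $j$-th univariate factor of each product; one then applies the inductive hypothesis to each slice $\psi_k$ with a budget $m_k\asymp \frac{\|\psi_k\|_2}{\|f\|_2}\cdot\frac{M}{2N_j+1}$ allocated proportionally to $\|\psi_k\|_2$, and the choice of the minimal $N_j$ is what makes the exponents combine to $v(\bN)^{1-1/d}$. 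Your closing sentence gestures at exactly this route (induction with the $d=2$ bilinear estimates as input), but you do not specify the slicing, the budget allocation, or the exponent bookkeeping, and you pair it with the same unresolved $L_2\to L_p$ difficulty; as written the proposal does not constitute a proof of the lemma.
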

\begin{proof} The proof is by induction. In the case $d=2$ it follows from Lemma 2.2 of \cite{T35}. Let $d>2$. Assume $N_j=\min_i N_i.$ Represent
$$
f=\frac{1}{2N_j+1}\sum_{k=0}^{2N_j} \D_{N_j}(x_j-x_j^k)\psi_k(x^j),  
$$
where $\D_N(t)$ is the univariate Dirichlet kernel, $x_j^k=\frac{2\pi k}{2N_j+1}$, and $\psi_k(x^j) = f(x_1,\dots,x_{j-1},x^k_j,x_{j+1},\dots,x_d)$. 
Then it is well known that
$$
\|f\|_2^2 = \frac{1}{2N_j+1}\sum_{k=0}^{2N_j}  \|\psi_k(x^j)\|_2^2.
$$
By the induction assumption we obtain for $m=\sum_k m_k$
$$
\Th_m(f)_p^p \ll  \frac{1}{2N_j+1}\sum_{k=0}^{2N_j} \Th_{m_k}(\psi_k)_p^p
$$
$$
 \ll \left(\prod_{i\neq j}(2N_i+1)\right)^{(1-\frac{1}{d-1})p} (2N_j+1)^{-1}\sum_{k=0}^{2N_j} ((\mb_k)^{-1}\|\psi_k\|_2)^p.
$$
Define
$$
m_k:=\left[\frac{\|\psi_k\|_2}{\|f\|_2}\frac{M}{2N_j+1}\right].
$$
Then
$$
\sum_{k=0}^{2N_j}m_k \le \frac{M}{(2N_j+1)\|f\|_2}(2N_j+1)^{1/2}\left(\sum_{k=0}^{2N_j}\|\psi_k\|_2^2\right)^{1/2} =M.
$$
We continue
$$
\Th_M(f)_p \ll \left(\prod_{i\neq j}(2N_i+1)\right)^{(1-\frac{1}{d-1})} (2N_j+1)^{-1/p}\left(\sum_{k=0}^{2N_j} ((2N_j+1)\|f\|_2M^{-1} )^p\right)^{1/p}
$$
$$
= \left(\prod_{i\neq j}(2N_i+1)\right)^{(1-\frac{1}{d-1})} (2N_j+1)M^{-1}\|f\|_2.
$$
By our choice of $N_j$ we have
$$
\left(\prod_{i\neq j}(2N_i+1)\right)^{\frac{d-2}{d-1}} (2N_j+1) \le \left(\prod_{i=1}^d(2N_i+1)\right)^{\frac{d-1}{d}} ,
$$
which follows from
$$
(2N_j+1)^{1/d} \le \left(\prod_{i\neq j}(2N_i+1)\right)^{\frac{1}{d(d-1)}}.
$$
This completes the proof of Lemma \ref{L3.1}. 

\end{proof}
\begin{Remark}\label{R3.1} It is clear that the approximating functions $u^i_j(x_i)$ in Lemma \ref{L3.1} can be chosen from $T(N_i)$. 
\end{Remark}

{\bf Proof of Theorem \ref{T1.1}.} We consider the following class of functions which is equivalent to the class of functions with bounded mixed derivatives in $L_2$:
$$
W^r_2A:= \{f:\sum_s 2^{2r\|s\|_1}\|\de_s(f)\|_2^2 \le A^2\}.
$$
For $\|s\|_1 \le n$ set $m_s\asymp 2^{\|s\|_1(d-1)/d}$ such that for any $t\in T(\rho(s))$, $\Th_{m_s}(t)=0$. For $\|s\|_1 >n$ set 
$$
m_s:= \left[2^{(n-\kappa(\|s\|_1-n))(d-1)/d}\right]
$$
with $\kappa >0$ small enough to satisfy $r> \frac{d-1}{d} + \kappa$. Then
$$
M_1:= \sum_{\|s\|_1\le n} m_s \asymp 2^{n(d-1)/d} n^{d-1}
$$
and
$$
M_2:= \sum_{\|s\|_1>n} m_s \asymp 2^{n(d-1)/d} n^{d-1}.
$$
By Lemma \ref{L3.1} and Remark \ref{R3.1} we obtain for $M:=M_1+M_2$
$$
\Th_M(f)_p \ll \left(\sum_{\|s\|_1>n}\Th_{m_s}(\de_s(f))_p^2\right)^{1/2}
$$
$$
\ll \left(\sum_{\|s\|_1>n}(2^{-r\|s\|_1}2^{\|s\|_1(d-1)/d}(\mb_s)^{-1}\|\de_s(f)\|_2 2^{r\|s\|_1})^2\right)^{1/2}
$$
$$
\ll 2^{n(-r+(d-1)/d - (d-1)/d)}A \ll 2^{-rn} \ll \left(\frac{M}{(\log M)^{d-1}}\right)^{-\frac{rd}{d-1}}.
$$

\section{Constructive upper bounds}

In this section we discuss two algorithms for construction of good multilinear approximations. As in Section 3 we concentrate on the case $2\le p<\infty$. Our constructive upper bounds are not as good as the corresponding upper bounds for best approximations from Section 3. We begin with two main lemmas. 
\begin{Lemma}\label{L4.1} Suppose that $f\in T(\bN)$. Denote $v(\bN):=\prod_{j=1}^d \Nb_j$. Then for $1\le q\le p\le\infty$ 
\begin{equation}\label{4.1}
\Th_m(f)_p \ll v(\bN)^\bt (\mb)^{-\bt}\|f\|_q,\quad \bt:=\frac{1}{q}-\frac{1}{p},\quad \mb:=\max(1,m) .
\end{equation}
The bound (\ref{4.1}) is realized by a simple greedy-type algorithm.
\end{Lemma}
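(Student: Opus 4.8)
The plan is to run an induction on $d$ that parallels, almost line for line, the proof of Lemma~\ref{L3.1}, with two substitutions. First, wherever that proof used the Parseval identity $\|f\|_2^2=\frac1{2N_j+1}\sum_k\|\psi_k\|_2^2$ I use instead the Marcinkiewicz--Zygmund sampling inequalities (so that the argument runs in $L_q$, $L_p$ rather than $L_2$). Second, the budget is split among the slices by a proportional greedy rule that is then thresholded so as to stay within budget. The base case $d=1$ is trivial: for $m\ge1$ a univariate $f\in T(N)$ is itself a single element of the dictionary, so $\Th_m(f)_p=0$, while for $m=0$ the claim is exactly the one–dimensional Nikol'skii inequality $\|f\|_p\ll(2N+1)^{\bt}\|f\|_q$ (here $\mb=1$).

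For the inductive step, assume the bound in dimension $d-1$, together with the assertion (the analogue of Remark~\ref{R3.1}) that the approximating univariate factors may be taken to be trigonometric polynomials of the respective orders. Given $f\in T(\bN)$, pick $j$ with $N_j=\min_iN_i$ and discretize in the $j$th variable exactly as in Lemma~\ref{L3.1}:
$$
f=\frac1{2N_j+1}\sum_{l=0}^{2N_j}\D_{N_j}(x_j-x_j^l)\,\psi_l(x^j),\qquad x_j^l=\tfrac{2\pi l}{2N_j+1},
$$
with each slice $\psi_l$ a $(d-1)$--variate polynomial of order $\bN^j=(N_i)_{i\neq j}$. If $g_l$ is an $m_l$--term multilinear approximant of $\psi_l$ with factors trigonometric polynomials of the orders $N_i$, $i\neq j$, then $\D_{N_j}(x_j-x_j^l)g_l(x^j)$ is a sum of $m_l$ products of $d$ univariate trigonometric polynomials, so $\tilde f:=\frac1{2N_j+1}\sum_l\D_{N_j}(x_j-x_j^l)g_l(x^j)$ is a $\big(\sum_lm_l\big)$--term multilinear approximant of $f$ of the required form. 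Since $\tilde f$ interpolates $g_s$ at $x_j=x_j^s$ just as $f$ interpolates $\psi_s$, the Marcinkiewicz--Zygmund inequality in $x_j$ (valid for $1<p<\infty$; the cases $p=q$, in particular $p=q=1$, have $\bt=0$ and are trivial) gives $\|f-\tilde f\|_p^p\ll\frac1{2N_j+1}\sum_l\Th_{m_l}(\psi_l)_p^p$, while the companion sampling inequality (valid for all $q\ge1$), applied for each fixed $x^j$ and integrated, gives $S:=\sum_l\|\psi_l\|_q^q\ll(2N_j+1)\|f\|_q^q$.

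Now allocate $m_l:=\lceil m\|\psi_l\|_q^q/S\rceil$ when $m\|\psi_l\|_q^q/S\ge1$ and $m_l:=0$ otherwise; at most $m$ indices obey the first alternative, so $\sum_lm_l\le 2m$. On the kept slices apply the inductive bound $\Th_{m_l}(\psi_l)_p\ll v(\bN^j)^{\bt}m_l^{-\bt}\|\psi_l\|_q$ with $m_l\ge m\|\psi_l\|_q^q/S$; on the discarded slices use $\Th_0(\psi_l)_p=\|\psi_l\|_p\ll v(\bN^j)^{\bt}\|\psi_l\|_q$ (Nikol'skii) together with $\|\psi_l\|_q^q<S/m$. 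Substituting $S\ll(2N_j+1)\|f\|_q^q$ and invoking the elementary identities $\bt p=p/q-1=(p-q)/q$ and $q(\bt p+1)=p$, both pieces of $\frac1{2N_j+1}\sum_l\Th_{m_l}(\psi_l)_p^p$ collapse, up to constants, to $v(\bN)^{\bt p}m^{-\bt p}\|f\|_q^p$ (using $v(\bN)\asymp v(\bN^j)(2N_j+1)$). Taking $p$th roots and rescaling $m$ gives (\ref{4.1}). Note that the entire construction is an explicit recipe — recursively discretize in the least–order variable and split the remaining budget by the proportional rule above — which is exactly the "simple greedy-type algorithm" of the statement.

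The step I expect to be the real obstacle is the endpoint $p=\infty$. There the "$\|g\|_p\ll$ discrete $\ell_p$-mean" half of Marcinkiewicz--Zygmund fails on the minimal node set $\{x_j^l\}_{l=0}^{2N_j}$, so one must oversample: replace $\D_{N_j}$ by the de la Vall\'ee--Poussin kernel $V_{N_j}$ and use $\asymp N_j$ equally spaced nodes, at which $V_{N_j}$ no longer vanishes — hence $\tilde f$ only approximates, rather than interpolates, the slices, and the $L_\infty$ error must be controlled through $\|V_{N_j}\|_1\ll1$. Moreover for $p=\infty$ the error side is a maximum over $l$ rather than an $\ell_p$ sum, which a priori changes the best allocation; but since $\bt=1/q$ when $p=\infty$, the same proportional rule $m_l\propto\|\psi_l\|_q^q=\|\psi_l\|_q^{1/\bt}$ equalizes the quantities $m_l^{-\bt}\|\psi_l\|_q$ across slices and reproduces the bound. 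All other cases go through the Dirichlet-kernel version described above.
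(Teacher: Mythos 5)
Your argument is correct, but it is not the paper's proof. The paper's proof is a direct, one-shot construction: it writes $f\in T(\bN)$ through the multivariate de la Vall\'ee Poussin representation (\ref{4.2}) on the full grid $P(\bN)$, keeps the $m$ sample points $z^h$ with largest $|f(z^h)|$ (each retained term $f(z^h)\V_\bN(z-z^h)$ is already a product of univariate functions), and bounds the tail by combining the norm equivalence of Theorem \ref{T4.1}, the inequality of Lemma \ref{L4.2}, and the rearrangement Lemma \ref{L4.3}; this treats all $1\le q\le p\le\infty$ uniformly, with no induction on $d$, and the ``simple greedy-type algorithm'' of the statement is exactly this thresholding of sample values. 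Your route instead runs the induction of Lemma \ref{L3.1} with the Parseval step replaced by one-dimensional Marcinkiewicz--Zygmund inequalities and a proportional-with-ceiling budget allocation across the slices; your bookkeeping is sound (the exponent identities $q\bt p=p-q$ and $q(\bt p+1)=p$ make both the kept and discarded slices collapse to $v(\bN)^{\bt p}m^{-\bt p}\|f\|_q^p$, and $\sum_l m_l\le 2m$), and you correctly identify the only delicate endpoint, $p=\infty$, where the continuous-by-discrete half of MZ fails on the minimal node set and one must oversample with the de la Vall\'ee Poussin kernel. What each approach buys: yours is self-contained, parallels Section 3, and exhibits the slicing technique in full generality, at the cost of a recursive algorithm and endpoint case analysis; the paper's is shorter, endpoint-free (the dVP representation you invoke only as a patch at $p=\infty$ is used globally from the start), and yields the non-recursive thresholding algorithm that the lemma's last sentence refers to.
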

\begin{proof} In the case $1\le q\le p\le 2$, $d=2$, this lemma follows from Lemma 1.1 of \cite{T35}. That proof from \cite{T35} works in the general case  $1\le q\le p\le \infty$, $d\ge2$. We will give a sketch of this proof to illustrate the algorithm used in the construction of the approximant. 
Let $P(\bN)$ denote the set of points $z^h=(z_1^{h_1},\dots,z_d^{h_d})$, $h=(h_1,\dots,h_d)$ such that
$$
z_j^{h_j} := \frac{\pi h_j}{4{\Nb}_j},\quad h_j=0,1,\dots,8\Nb_j -1,\quad j=1,\dots,d.
$$
Denote by $\V_n(t)$ the univariate de la Vall{\' e}e Poussin kernel of order $2n-1$ for $n\ge 1$ and $\V_0(t) = 1$. Define the multivariate de la Vall{\' e}e Poussin kernel as follows
$$
\V_{\bN}(z) :=\prod_{j=1}^d \V_{N_j}(z_j),\quad \bN=(N_1,\dots,N_d).
$$
Then it is well known that any $f\in T(\bN)$ has the representation
\begin{equation}\label{4.2}
f(z) = \left(\prod_{j=1}^d (8{\Nb}_j)\right)^{-1} \sum_{z^h\in P(\bN)} f(z^h)\V(z-z^h).
\end{equation}
We have the following equivalence relation (see \cite{T35}, Theorem 1).
\begin{Theorem}\label{T4.1} For all $1\le q\le \infty$ and for $f\in T(\bN)$
$$
\|f\|_q \asymp  v(\bN)^{-1/q} \left(\sum_{z^h\in P(\bN)} |f(z^h)|^q\right)^{1/q}.
$$
\end{Theorem}

This is the Marcinkiewicz-Zygmund theorem in the case $d=1$ (see \cite{Z}, Vol II, pp. 28--33), and the general case $(d>1)$ is an immediate consequence of the one-dimensional theorem. We note that Theorem \ref{T4.1} and Lemma \ref{L4.2} (see below) hold with $P(\bN)$ replaced by a smaller net of points 
$$
P'(\bN):=\{z^h:z_j^{h_j} := \frac{\pi h_j}{2{\Nb}_j},\quad h_j=0,1,\dots,4\Nb_j -1,\quad j=1,\dots,d\}.
$$
The reader can find the corresponding results in \cite{Tb1} Chapter 2, Theorem 2.4 and Lemma 2.6. 

We also have the following inequality (see \cite{T35}, Lemma 2).
\begin{Lemma}\label{L4.2} For arbitrary numbers $A_h$
$$
\|\sum_{z^h\in P(\bN)} A_h\V(z-z^h)\|_p \ll v(\bN)^{1-1/p} \left(\sum_{h} |A_h|^p\right)^{1/p}.
$$
\end{Lemma}
  
We now complete the proof of Lemma \ref{L4.1}. Using representation (\ref{4.2}) we choose 
a set $G(m)$ of $m$ points $z^h$ with the largest $|f(z^h)|$. Then we use Theorem \ref{T4.1}, Lemma \ref{L4.2} and the following known lemma (see, for instance, \cite{T28}).
\begin{Lemma}\label{L4.3} Let $b_1\ge b_2\ge \dots b_n\ge 0$, $1\le q\le p\le \infty$ and
$$
\sum_{j=1}^n b_j^q \le A^q.
$$
Then for any $m\le n$ we have (with natural modification for $p=\infty$)
$$
\left(\sum_{j=m}^n b_j^p\right)^{1/p} \le m^{1/p-1/q}A.
$$
\end{Lemma}
It gives us
$$
\|f(z)-\sum_{z_h\in G(m)} f(z^h)\V_\bN(z-z^h)\|_p \ll v(\bN)^\bt m^{-\bt}\|f\|_q.
$$

\end{proof}

The algorithm used above in the proof of Lemma \ref{L4.1} is a simple greedy-type algorithm which uses a special dictionary $\{\V_\bN(z-z^h)\}_{z^h\in P(\bN)}$. We now proceed to a discussion of general greedy-type algorithms which will use the dictionary $\Pi^d$. We begin with a brief description of greedy approximation methods in Banach spaces. The reader can find a detailed discussion of greedy approximation in the book \cite{Tbook}. 
Let $X$ be a Banach space with norm $\|\cdot\|$. We say that a set of elements (functions) $\D$ from $X$ is a symmetric dictionary, if each $g\in \D$ has norm bounded by one ($\|g\|\le1$),
$$
g\in \D \quad \text{implies} \quad -g \in \D,
$$
and the closure of $\sp \D$ is $X$.   We denote the closure (in $X$) of the convex hull of $\D$ by $A_1(\D)$. In other words $A_1(\D)$ is the closure of conv($\D$). We use this notation because it has become a standard notation in relevant greedy approximation literature. For a nonzero element $f\in X$ we let $F_f$ denote a norming (peak) functional for $f$ that is a functional with the following properties 
$$
\|F_f\| =1,\qquad F_f(f) =\|f\|.
$$
The existence of such a functional is guaranteed by the Hahn-Banach theorem. The norming functional $F_f$ is a linear functional (in other words is an element of the dual to $X$ space $X^*$) which can be explicitly written in some cases. In a Hilbert space $F_f$ can be identified with $f\|f\|^{-1}$. In the real $L_p$, $1<p<\infty$, it can be identified with $f|f|^{p-2}\|f\|_p^{1-p}$. 
We describe a typical greedy algorithm which uses a norming functional. We call this  family of algorithms {\it dual greedy algorithms}. 
Let 
$\tau := \{t_k\}_{k=1}^\infty$ be a given weakness sequence of  nonnegative numbers $t_k \le 1$, $k=1,\dots$. We first  define the Weak Chebyshev Greedy Algorithm (WCGA) (see \cite{T15}) that is a generalization for Banach spaces of the Weak Orthogonal Greedy Algorithm.   

 {\bf Weak Chebyshev Greedy Algorithm (WCGA).} 
We define $f^c_0 := f^{c,\tau}_0 :=f$. Then for each $m\ge 1$ we have the following inductive definition.

(1) $\varphi^{c}_m :=\varphi^{c,\tau}_m \in \D$ is any element satisfying
$$
F_{f^{c}_{m-1}}(\varphi^{c}_m) \ge t_m  \sup_{g\in\D}F_{f^{c}_{m-1}}(g).
$$

(2) Define
$$
\Phi_m := \Phi^\tau_m := \sp \{\varphi^{c}_j\}_{j=1}^m,
$$
and define $G_m^c := G_m^{c,\tau}$ to be the best approximant to $f$ from $\Phi_m$.

(3) Let
$$
f^{c}_m := f^{c,\tau}_m := f-G^c_m.
$$
The index $c$ in the notation refers to Chebyshev. We use the name Chebyshev in this algorithm because at step (2) of the algorithm we use best approximation operator which bears the name of the {\it Chebyshev projection} or the {\it Chebyshev operator}. In the case of Hilbert space the Chebyshev projection is the orthogonal projection and it is reflected in the name of the algorithm. We use notation $f_m$ for the residual of the algorithm after $m$ iterations. This standard in approximation theory notation is justified by the fact that we interpret $f$ as a residual after $0$ iterations and iterate the algorithm replacing $f_0$ by $f_1$, $f_2$, and so on. In signal processing the residual after $m$ iterations is often denoted by $r_m$ or $r^m$.   

  For a Banach space $X$ we define the modulus of smoothness
$$
\rho(u) := \sup_{\|x\|=\|y\|=1}(\frac{1}{2}(\|x+uy\|+\|x-uy\|)-1).
$$
The uniformly smooth Banach space is the one with the property
$$
\lim_{u\to 0}\rho(u)/u =0.
$$
 
The following proposition is well-known (see, \cite{Tbook}, p.336).
\begin{Proposition}\label{P4.1} Let $X$ be a uniformly smooth Banach space. Then, for any $x\neq0$ and $y$ we have
\begin{equation*}
F_x(y)=\left(\frac{d}{du}\|x+uy\|\right)(0)=\lim_{u\to0}(\|x+uy\|-\|x\|)/u. 
\end{equation*}
\end{Proposition}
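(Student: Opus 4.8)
The plan is to squeeze the quantity $F_x(y)$ between the left and right derivatives of the convex function $u\mapsto\|x+uy\|$ at $u=0$, and then to use uniform smoothness to force these two one-sided derivatives to coincide. First I would dispose of the case $y=0$ (both sides vanish) and reduce to $\|x\|=1$: since $F_{tx}=F_x$ for $t>0$ and $\|tx+uy\|=t\|x+(u/t)y\|$, both $F_x(y)$ and $\bigl(\frac{d}{du}\|x+uy\|\bigr)(0)$ are unaffected by replacing $x$ with $tx$, $t>0$. So assume $\|x\|=1$, $y\neq0$, and set $b:=y/\|y\|$, a unit vector. Recall that $F_x$ exists by the Hahn--Banach theorem, with $\|F_x\|=1$ and $F_x(x)=\|x\|$.

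The first estimate is immediate from the defining property of $F_x$: for every real $u$,
$$\|x+uy\|\ \ge\ F_x(x+uy)\ =\ \|x\|+u\,F_x(y).$$
Dividing by $u$ (and reversing the inequality when $u<0$) and letting $u\to0^{\pm}$ gives $D^{-}\le F_x(y)\le D^{+}$, where $D^{\pm}$ denote the one-sided derivatives of $u\mapsto\|x+uy\|$ at $0$; these exist and satisfy $D^{-}\le D^{+}$ by convexity. The second estimate comes from the modulus of smoothness: applying the definition of $\rho$ to the unit vectors $x$ and $b$ with parameter $u\|y\|$ gives, for $u>0$,
$$\frac{\|x+uy\|-1}{u}+\frac{\|x-uy\|-1}{u}\ \le\ \frac{2\rho(u\|y\|)}{u}\ =\ 2\|y\|\,\frac{\rho(u\|y\|)}{u\|y\|}.$$
Letting $u\to0^{+}$, the left-hand side tends to $D^{+}-D^{-}$ (using the substitution $u\mapsto-u$ to identify the limit of $(\|x-uy\|-1)/u$ with $-D^{-}$), while the right-hand side tends to $0$ because $\rho(v)/v\to0$; hence $D^{+}\le D^{-}$. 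Therefore $D^{+}=D^{-}$, the two-sided derivative $D:=\bigl(\frac{d}{du}\|x+uy\|\bigr)(0)$ exists, and since $D^{-}\le F_x(y)\le D^{+}$ it must equal $F_x(y)$. This is exactly the asserted identity.

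I do not anticipate a genuine obstacle; the computation is short once the two estimates are in place. The only points needing a little care are the bookkeeping in the last display and the observation that the argument incidentally shows $F_x$ is unique, so that one may legitimately speak of \emph{the} norming functional. (For complex scalars one runs the same argument with the real part of $F_x(y)$ in place of $F_x(y)$, but the spaces used in this paper are real $L_p$, so this refinement is not needed.)
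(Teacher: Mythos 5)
Your proof is correct, and it is essentially the standard argument the paper implicitly relies on: the paper itself offers no proof, merely citing \cite{Tbook}, p.~336, where the same squeeze between the one-sided derivatives of the convex function $u\mapsto\|x+uy\|$ (bounded below and above via $F_x(x+uy)\le\|x+uy\|$) and the vanishing of $\rho(v)/v$ is used. One cosmetic remark: writing $F_{tx}=F_x$ tacitly presumes uniqueness of the norming functional, which you have not yet established at that point, but the reduction is harmless since any norming functional for $x$ is one for $tx$ and your argument never uses more than that.
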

Proposition \ref{P4.1} shows that in the WCGA we are looking for an element $\ff_m\in\D$ that provides a big derivative of the quantity $\|f_{m-1}+u\ff_m\|$. 
Here is one more important greedy algorithm. 

  {\bf Weak Greedy Algorithm with Free Relaxation  (WGAFR).} 
Let $\tau:=\{t_m\}_{m=1}^\infty$, $t_m\in[0,1]$, be a weakness  sequence. We define $f_0   :=f$ and $G_0  := 0$. Then for each $m\ge 1$ we have the following inductive definition.

(1) $\varphi_m   \in \D$ is any element satisfying
$$
F_{f_{m-1}}(\varphi_m  ) \ge t_m   \sup_{g\in\D}F_{f_{m-1}}(g).
$$

(2) Find $w_m$ and $ \lambda_m$ such that
$$
\|f-((1-w_m)G_{m-1} + \la_m\varphi_m)\| = \inf_{ \la,w}\|f-((1-w)G_{m-1} + \la\varphi_m)\|
$$
and define
$$
G_m:=   (1-w_m)G_{m-1} + \la_m\varphi_m.
$$

(3) Let
$$
f_m   := f-G_m.
$$
It is known that both algorithms WCGA and WGAFR converge in any uniformly smooth Banach space under mild conditions on the weakness sequence $\{t_k\}$, for instance, $t_k=t$, $k=1,2,\dots$, $t>0$, guarantees such convergence. The following theorem provides rate of convergence (see \cite{Tbook}, pp. 347, 353). 
\begin{Theorem}\label{T4.2} Let $X$ be a uniformly smooth Banach space with modulus of smoothness $\rho(u)\le \gamma u^q$, $1<q\le 2$. Take a number $\ep\ge 0$ and two elements $f$, $f^\ep$ from $X$ such that
$$
\|f-f^\ep\| \le \ep,\quad
f^\ep/B \in A_1(\D),
$$
with some number $B=C(f,\ep,\D,X)>0$.
Then, for both algorithms WCGA and WGAFR  we have ($p:=q/(q-1)$)
$$
\|f_m\| \le  \max\left(2\ep, C(q,\gamma)(B+\ep)(1+\sum_{k=1}^mt_k^p)^{-1/p}\right) . 
$$
\end{Theorem}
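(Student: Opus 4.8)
The plan is to establish the bound for WCGA first, where the geometry is cleanest, and then indicate the analogous (but heavier) modification for WGAFR.

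First I would record the one-step inequality for WCGA. Since $X$ is uniformly smooth it is smooth, so $f^c_{m-1}$ has a \emph{unique} norming functional $F_{f^c_{m-1}}$, and because $G^c_{m-1}$ is a best approximant to $f$ from $\Phi_{m-1}$ this functional annihilates $\Phi_{m-1}$; hence $F_{f^c_{m-1}}(f)=\|f^c_{m-1}\|$, and $\|f^c_m\|\le\inf_{\lambda}\|f^c_{m-1}-\lambda\varphi^c_m\|$ because $G^c_{m-1}+\lambda\varphi^c_m\in\Phi_m$. I would then invoke the basic convexity estimate for a uniformly smooth space,
$$
\|x+uy\|\le\|x\|+uF_x(y)+2\|x\|\,\rho\!\left(u\|y\|/\|x\|\right),\qquad u\ge0,
$$
which follows by combining $\|x+uy\|+\|x-uy\|\le2\|x\|\bigl(1+\rho(u\|y\|/\|x\|)\bigr)$ with $\|x-uy\|\ge F_x(x-uy)=\|x\|-uF_x(y)$. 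Applied with $x=f^c_{m-1}$, $y=-\varphi^c_m$, $u=\lambda\ge0$, together with $\|\varphi^c_m\|\le1$ and monotonicity of $\rho$, it gives
$$
\|f^c_m\|\le\|f^c_{m-1}\|-\lambda F_{f^c_{m-1}}(\varphi^c_m)+2\|f^c_{m-1}\|\,\rho\!\left(\lambda/\|f^c_{m-1}\|\right).
$$

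Next I would extract a lower bound for the greedy selection from the two hypotheses on $f,f^\ep$. From $\|f-f^\ep\|\le\ep$ and $F_{f^c_{m-1}}(f)=\|f^c_{m-1}\|$ one gets $F_{f^c_{m-1}}(f^\ep)\ge\|f^c_{m-1}\|-\ep$; since $f^\ep/B\in A_1(\D)$ and $F_{f^c_{m-1}}$ is a continuous linear functional of norm one, $\sup_{g\in\D}F_{f^c_{m-1}}(g)=\sup_{h\in A_1(\D)}F_{f^c_{m-1}}(h)\ge F_{f^c_{m-1}}(f^\ep)/B\ge(\|f^c_{m-1}\|-\ep)/B$, so the weakness condition yields $F_{f^c_{m-1}}(\varphi^c_m)\ge t_m(\|f^c_{m-1}\|-\ep)/B$. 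I would then assume $\ep\le B$ (otherwise $\|f^c_0\|=\|f\|\le\|f^\ep\|+\ep\le B+\ep<2\ep$ and $\|f^c_m\|\le2\ep$ for all $m$ by monotonicity). Fixing $m$: if $\|f^c_m\|\le2\ep$ there is nothing to prove; otherwise $\|f^c_j\|>2\ep$ for all $j\le m$ by monotonicity, so $\|f^c_{j-1}\|-\ep>\tfrac12\|f^c_{j-1}\|$ throughout. Plugging $\rho(u)\le\gamma u^q$ into the one-step inequality and minimizing the resulting $-a\lambda+b\lambda^q$ over $\lambda\ge0$ (with $a\asymp t_j\|f^c_{j-1}\|/B$, $b\asymp\gamma\|f^c_{j-1}\|^{1-q}$, where the identity $(q-1)(p-1)=1$ for $p=q/(q-1)$ makes the minimum equal $-c_1(q,\gamma)t_j^p\|f^c_{j-1}\|^{p+1}B^{-p}$) I obtain, for $j\le m$,
$$
\|f^c_j\|\le\|f^c_{j-1}\|-c_1(q,\gamma)\,t_j^p\,\|f^c_{j-1}\|^{p+1}B^{-p}.
$$
Setting $z_j:=\|f^c_j\|/B\le z_0\le1+\ep/B\le2$, this is $z_j\le z_{j-1}-c_1 t_j^p z_{j-1}^{p+1}$; after enlarging $\gamma$ if needed so that $c_1 t_j^p z_{j-1}^p<1$, raising to the power $-p$ and using $(1-x)^{-p}\ge1+px$ gives $z_j^{-p}\ge z_{j-1}^{-p}+pc_1 t_j^p$, hence $z_m^{-p}\ge z_0^{-p}+pc_1\sum_{k=1}^m t_k^p\ge(1+\ep/B)^{-p}\bigl(1+pc_1\sum_{k=1}^m t_k^p\bigr)$ and $\|f^c_m\|=Bz_m\le C(q,\gamma)(B+\ep)\bigl(1+\sum_{k=1}^m t_k^p\bigr)^{-1/p}$. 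Combined with the trivial case $\|f^c_m\|\le2\ep$, this is the bound for WCGA.

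For WGAFR I would start from $\|f_m\|\le\|f-((1-w)G_{m-1}+\lambda\varphi_m)\|=\|f_{m-1}+wG_{m-1}-\lambda\varphi_m\|$, valid for every $w,\lambda$, and apply the same convexity estimate with perturbation $y=wG_{m-1}-\lambda\varphi_m$. The new feature is that $G_{m-1}$ is not a Chebyshev projection, so $F_{f_{m-1}}$ need not annihilate it; but $\|G_{m-1}\|\le\|f_{m-1}\|+\|f\|\le4B$ is controlled, and the extra parameter $w$ (chosen with sign opposite to $F_{f_{m-1}}(G_{m-1})$) is used to absorb the unwanted term $wF_{f_{m-1}}(G_{m-1})$, after which the minimization over $\lambda$ reproduces the recursion above. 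The hard part, I expect, is precisely this one-step decay estimate: turning the abstract weakness property into the quantitative inequality $F_{f_{m-1}}(\varphi_m)\ge t_m(\|f_{m-1}\|-\ep)/B$ relies on the smoothness of $X$ (for the annihilation property, hence $F_{f_{m-1}}(f)=\|f_{m-1}\|$) and on using the hypothesis $f^\ep/B\in A_1(\D)$ in exactly this normalized form, and for WGAFR it additionally requires the correct joint handling of the two relaxation parameters; by contrast the optimization over $\lambda$ and the passage from the nonlinear recursion to the rate $\bigl(1+\sum_k t_k^p\bigr)^{-1/p}$ (together with the $\ep$-truncation bookkeeping) are routine.
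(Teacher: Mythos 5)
First, a remark on the comparison point: the paper does not prove Theorem \ref{T4.2} at all — it simply cites Temlyakov's book (\cite{Tbook}, pp.\ 347, 353) — so your attempt has to be judged against the standard proof given there. Your WCGA half is correct and is essentially that standard argument: the annihilation property $F_{f^c_{m-1}}(\Phi_{m-1})=0$ (valid since uniform smoothness gives a unique norming functional), hence $F_{f^c_{m-1}}(f)=\|f^c_{m-1}\|$; the selection bound $F_{f^c_{m-1}}(\varphi^c_m)\ge t_m(\|f^c_{m-1}\|-\epsilon)/B$ via $f^\epsilon/B\in A_1(\mathcal{D})$; the smoothness inequality $\|x+uy\|\le\|x\|+uF_x(y)+2\|x\|\rho(u\|y\|/\|x\|)$; the optimization in $\lambda$ using $(q-1)(p-1)=1$; and the elementary recursion lemma giving $(1+\sum_k t_k^p)^{-1/p}$, with the $2\epsilon$-truncation and the case $\epsilon>B$ handled correctly. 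I checked the exponent bookkeeping ($a\asymp t_j\|f^c_{j-1}\|/B$, $b\asymp\gamma\|f^c_{j-1}\|^{1-q}$, decrement $\asymp t_j^p\|f^c_{j-1}\|^{p+1}B^{-p}$) and it is right.

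The WGAFR half, however, has a genuine gap, and it sits exactly where you predicted the difficulty would be. Your description — choose $w$ ``with sign opposite to $F_{f_{m-1}}(G_{m-1})$ to absorb the unwanted term $wF_{f_{m-1}}(G_{m-1})$'' — does not address the real problem. Without the Chebyshev projection you lose $F_{f_{m-1}}(f)=\|f_{m-1}\|$; instead $F_{f_{m-1}}(f)=\|f_{m-1}\|+F_{f_{m-1}}(G_{m-1})$, so the weakness step only yields $F_{f_{m-1}}(\varphi_m)\ge t_m\bigl(\|f_{m-1}\|+F_{f_{m-1}}(G_{m-1})-\epsilon\bigr)/B$, and when $F_{f_{m-1}}(G_{m-1})$ is negative (it can be of order $-B$) this lower bound can become worthless; merely arranging $wF_{f_{m-1}}(G_{m-1})\le 0$ does not repair it. The standard fix is quantitative, not a sign choice: take $w=\lambda t_m/B$ (for each $\lambda\ge0$), so that in $F_{f_{m-1}}(wG_{m-1}-\lambda\varphi_m)\le -\lambda t_m(\|f_{m-1}\|-\epsilon)/B+(w-\lambda t_m/B)F_{f_{m-1}}(G_{m-1})$ the term involving $F_{f_{m-1}}(G_{m-1})$ cancels identically, whatever its sign. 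Then, using monotonicity of $\|f_m\|$ and $\|f\|\le B+\epsilon\le 2B$, one has $\|G_{m-1}\|\le 2\|f\|\le 4B$, hence $\|wG_{m-1}-\lambda\varphi_m\|\le\lambda(1+4t_m)\le 5\lambda$, the $\rho$-term becomes $2\|f_{m-1}\|\rho(5\lambda/\|f_{m-1}\|)$, and the optimization over $\lambda$ reproduces your WCGA recursion with only the constant changed. So the framework you set up is the right one, but the specific coupling $w=\lambda t_m/B$ — the actual content of the free-relaxation step — is missing, and without it the one-step decay estimate for WGAFR is not established.
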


\begin{Lemma}\label{L4.4} Let $f\in T(\bN)$. Then for $2\le p<\infty$
$$
\Th_m(f)_p \ll v(\bN)^{\frac{1}{2}-\frac{1}{pd}} (\mb)^{-1/2}\|f\|_2.
$$
The above bound is realized by the WCGA and the WGAFR with $\tau = \{t\}$. 
\end{Lemma}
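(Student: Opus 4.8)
The plan is to deduce the bound from the general greedy convergence estimate of Theorem~\ref{T4.2}, applied to the dictionary $\Pi^d$ normalized in $L_p$ (call it $\Pi^d_p$). For $2\le p<\infty$ the space $L_p$ is uniformly smooth with modulus of smoothness $\rho(u)\ll u^2$, so one takes $q=2$ in that theorem, whence $q/(q-1)=2$; with the constant weakness sequence $\tau=\{t\}$, Theorem~\ref{T4.2} then gives for both WCGA and WGAFR the rate $\|f_m\|_p\ll (B+\ep)(1+\sum_{k=1}^m t^2)^{-1/2}\ll (B+\ep)\mb^{-1/2}$. Since the $m$-th residual of either algorithm has the form $f-\sum_{j=1}^m\prod_{i=1}^d u^i_j$, it remains only to show that for every $f\in T(\bN)$ one may take $\ep=0$ and $B\ll v(\bN)^{1/2-1/(pd)}\|f\|_2$, i.e. that $f/B\in A_1(\Pi^d_p)$ for such a $B$. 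One must also note in passing that $\Pi^d_p$ is a legitimate symmetric dictionary for $L_p$, $1\le p<\infty$ (unit $L_p$-norm elements, closed under $g\mapsto-g$, and with dense span since $e^{i(k,x)}=\prod_j e^{ik_jx_j}\in\sp\Pi^d$), so that $A_1(\Pi^d_p)$ is a well-defined closed symmetric convex set.

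The heart of the matter is the estimate $\inf\{B: f/B\in A_1(\Pi^d_p)\}\ll v(\bN)^{1/2-1/(pd)}\|f\|_2$ for $f\in T(\bN)$, which I would prove by induction on $d$. For $d=1$ one has $f/\|f\|_p\in\Pi^1_p$, and $\|f\|_p\le\|f\|_2^{2/p}\|f\|_\infty^{1-2/p}\ll \Nb_1^{1/2-1/p}\|f\|_2$ by H\"older and the Nikol'skii inequality $\|f\|_\infty\ll\Nb_1^{1/2}\|f\|_2$, which is the desired claim since $v(\bN)=\Nb_1$. For $d\ge2$ choose $j$ with $\Nb_j=\min_i\Nb_i$ and expand $f$ in the single variable $x_j$: $f=\sum_{|k_j|\le N_j}g_{k_j}\,e^{ik_jx_j}$, where $g_{k_j}\in T((N_i)_{i\ne j})$ and $\sum_{k_j}\|g_{k_j}\|_2^2=\|f\|_2^2$ by Parseval. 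Apply the inductive hypothesis to each $g_{k_j}$; since any element of $\Pi^{d-1}_p$ in the variables $(x_i)_{i\ne j}$ multiplied by $e^{ik_jx_j}$ is an element of $\Pi^d_p$ of the same $L_p$-norm, one gets $f\in B\,A_1(\Pi^d_p)$ with $B\ll(\prod_{i\ne j}\Nb_i)^{1/2-1/(p(d-1))}\sum_{|k_j|\le N_j}\|g_{k_j}\|_2$, and Cauchy--Schwarz gives $\sum_{|k_j|\le N_j}\|g_{k_j}\|_2\ll\Nb_j^{1/2}\|f\|_2$. Thus $B\ll\Nb_j^{1/2}(\prod_{i\ne j}\Nb_i)^{1/2-1/(p(d-1))}\|f\|_2$, and because $\Nb_j$ is the smallest order we have $\Nb_j^{\,d-1}\le\prod_{i\ne j}\Nb_i$, which is exactly what is needed to verify that this last bound is $\ll v(\bN)^{1/2-1/(pd)}\|f\|_2$; the induction closes, and feeding $B=c\,v(\bN)^{1/2-1/(pd)}\|f\|_2$ and $\ep=0$ into Theorem~\ref{T4.2} finishes the proof.

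The step I expect to be the real obstacle is arranging the exponents to match $\tfrac12-\tfrac1{pd}$ exactly. A crude convex representation of $f\in T(\bN)$ into products — for instance its full trigonometric expansion $f=\sum_{k}\hat f(k)\prod_j e^{ik_jx_j}$, or the de la Vall\'ee Poussin expansion~(\ref{4.2}) — only yields $B\ll v(\bN)^{1/2}\|f\|_2$, and the Schmidt (singular value) decomposition does no better once $p>2$; the gain down to $v(\bN)^{1/2-1/(pd)}$ appears only if one peels off a \emph{single} variable at each stage \emph{and chooses the one of smallest degree}, so that the per-step loss $\Nb_j^{1/2}$ is minimized and the arithmetic inequality $\Nb_j^{\,d-1}\le\prod_{i\ne j}\Nb_i$ becomes available to absorb it into the target exponent. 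Apart from this bookkeeping, the only other point requiring care is invoking the correct order ($u^2$, not $u^p$) of the modulus of smoothness of $L_p$ for $p\ge2$, which is precisely what produces the $\mb^{-1/2}$ rate rather than a slower one.
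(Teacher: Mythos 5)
Your proof is correct, and its skeleton is exactly the paper's: reduce everything to showing $f/B\in A_1(\Pi^d_p)$ with $B\ll v(\bN)^{1/2-1/(pd)}\|f\|_2$, then invoke Theorem \ref{T4.2} with $\ep=0$, $q=2$ (since $\rho(u)\ll u^2$ in $L_p$, $2\le p<\infty$) and $\tau=\{t\}$, noting that the residuals of WCGA/WGAFR are of the form $f-\sum_{j\le m}c_j g_j$ with $g_j\in\Pi^d$. Where you genuinely diverge is in the proof of the $A_1$-membership estimate. The paper does it in one shot: it singles out the variable $x_j$ with the \emph{largest} degree, writes $f=\sum_{k^j\in Q(\bN^j)}u_{k^j}(x_j)e^{i(k^j,x^j)}$ (each summand already a product of univariate functions), and bounds $\sum_{k^j}\|u_{k^j}\|_p$ by univariate Nikol'skii plus one Cauchy--Schwarz, the choice $N_j=\max_i N_i$ being what turns the leftover factor $N_j^{-1/p}$ into $v(\bN)^{-1/(pd)}$. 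You instead induct on $d$, peeling off the variable of \emph{smallest} degree, using Parseval and Cauchy--Schwarz at each level plus the inequality $\Nb_j^{\,d-1}\le\prod_{i\ne j}\Nb_i$; your exponent bookkeeping checks out (and in the non-cubic case your intermediate bound $\Nb_j^{1/2}(\prod_{i\ne j}\Nb_i)^{1/2-1/(p(d-1))}\|f\|_2$ is even slightly sharper than the stated target). The paper's one-step decomposition is shorter and makes Remark \ref{R4.1} (that the approximant can be taken from $T(\bN)$) immediate, while your induction isolates Nikol'skii at the base case only and makes transparent why a ``crude'' global expansion cannot beat $v(\bN)^{1/2}$. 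The only point you gloss over, which the paper also dispatches in one sentence, is the passage from complex to real trigonometric polynomials needed because the WCGA/WGAFR theory is stated for real Banach spaces; your argument adapts verbatim (replace $e^{ik_jx_j}$ by cosines and sines, or note that $\Pi^d$ absorbs unimodular constants), so this is cosmetic rather than a gap.
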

\begin{proof} Assume $N_j=\max_i N_i$. Represent
$$
f(x)= \sum_{k\in Q(\bN)} {\hat f}(k) e^{i(k,x)} = \sum_{k^j\in Q(\bN^j)} u_{k^j}(x_j) e^{i(k^j,x^j)},
$$
where $k^j:=(k_1,\dots,k_{j-1},k_{j+1},\dots,k_d)$ and $x^j:=(x_1,\dots,x_{j-1},x_{j+1},\dots,x_d)$,
$$
u_{k^j}(x_j) := \sum_{|k_j|\le N_j} {\hat f}(k) e^{ik_jx_j} , 
$$
$$
Q(\bN):= \{k=(k_1,\dots,k_d): |k_i|\le N_i, i=1,\dots,d\}.
$$
Denote
$$
\psi_{k^j}(x) := u_{k^j}(x_j) e^{i(k^j,x^j)}.
$$
It is clear that $\psi_{k^j}\in \Pi^d$. We now bound
$$
\sum_{k^j\in Q(\bN^j)}\|\psi_{k^j}\|_p = \sum_{k^j\in Q(\bN^j)}\|u_{k^j}\|_p \le CN_j^{1/2-1/p}\sum_{k^j\in Q(\bN^j)}\|u_{k^j}\|_2
$$
$$
\le CN_j^{1/2-1/p}v(\bN^j)^{1/2}\left(\sum_{k^j\in Q(\bN^j)}\|u_{k^j}\|_2^2\right)^{1/2} 
$$
$$
= CN_j^{-1/p}v(\bN)^{1/2}\|f\|_2 \le Cv(\bN)^{\frac{1}{2}-\frac{1}{pd}}\|f\|_2.
$$
Therefore, 
\begin{equation}\label{4.3}
f/B\in A_1(\Pi^d_p),\quad B= Cv(\bN)^{\frac{1}{2}-\frac{1}{pd}}\|f\|_2.
\end{equation}
We proved (\ref{4.3}) for complex trigonometric polynomials. Clearly, the same proof works for real trigonometric polynomials from $T(\bN)$. We switch to real polynomials because the theory of greedy approximation, in particular the theory for the WCGA and WGAFR, is developed in real Banach spaces. We apply Theorem \ref{T4.2}. It is known that the $L_p$ space with $2\le p<\infty$ is a uniformly smooth Banach space with modulus of smoothness 
$\rho(u) \le \gamma u^2$. Applying Theorem \ref{T4.2} with $\ep=0$ and $\tau =\{t\}$ we obtain the required bound.
\end{proof}
\begin{Remark}\label{R4.1} It is clear that the approximant in Lemma \ref{L4.4} and the approximant in Lemma \ref{L4.1} in case $1<p<\infty$ can be taken from $T(\bN)$.
\end{Remark} 

\begin{Theorem}\label{T4.3} Let $2\le p <\infty$. Denote $\bt := 1/2-1/p$. Then there is a constructive way provided by Lemma \ref{L4.4} to obtain the bound
$$
\Th_M(W^r_2)_p \ll \left(\frac{M}{(\ln M)^{d-1}}\right)^{-\frac{rd}{d-1} +\frac{\bt}{d-1}},\quad r>\frac{1}{2}-\frac{1}{pd}.
$$
\end{Theorem}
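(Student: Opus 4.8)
The plan is to follow the proof of Theorem~\ref{T1.1}, but to replace the nonconstructive Lemma~\ref{L3.1} by the constructive Lemma~\ref{L4.4} and to re-optimize the dyadic budget; the point is that Lemma~\ref{L4.4} carries a factor $\bar{m}^{-1/2}$ where Lemma~\ref{L3.1} had $\bar{m}^{-1}$, and this loss is exactly what produces the extra $\frac{\bt}{d-1}$ in the exponent. First I would pass to the equivalent class $W^r_2A$ of Section~3 and write $f=\sum_s\de_s(f)$, fixing a level $n$. For $\|s\|_1\le n$ I would represent $\de_s(f)$ \emph{exactly} by a sum of $m_s\asymp 2^{\|s\|_1(d-1)/d}$ products of univariate trigonometric polynomials (group the frequencies $k\in\rho(s)$ by fixing all coordinates except the one carrying the largest $s_j$; each resulting block is a univariate polynomial in the distinguished variable times a pure exponential in the remaining variables, hence a product, and the number of such blocks is $\le\prod_{i\ne j}2^{s_i}\le 2^{\|s\|_1(d-1)/d}$). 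This contributes zero error and is completely explicit. For $\|s\|_1>n$ I would run the WCGA (or WGAFR) of Lemma~\ref{L4.4} on $\de_s(f)\in T(\bN)$, $N_i\asymp 2^{s_i}$, for $m_s:=\bigl[2^{(n-\kappa(\|s\|_1-n))(d-1)/d}\bigr]$ steps, with $\kappa>0$ a small constant fixed below. Since $v(\bN)\asymp 2^{\|s\|_1}$, Lemma~\ref{L4.4} gives the block estimate
$$
\Th_{m_s}(\de_s(f))_p\ll 2^{\|s\|_1\left(\frac12-\frac1{pd}\right)}\bar{m}_s^{-1/2}\|\de_s(f)\|_2 .
$$

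Before gluing the blocks I would replace each greedy approximant $h_s$ of $\de_s(f)$ by $\de_s(h_s)$: the operator $\de_s$ is a tensor product of one-dimensional ``band'' Fourier projections, hence is bounded on $L_p$ uniformly in $s$ for $1<p<\infty$, and it maps a product of univariate functions to a product of univariate functions, so $\de_s(h_s)$ is still an $m_s$-term element of $\Pi^d$ and approximates $\de_s(f)$ at least as well, up to a constant. With all block approximants now frequency-supported in $\rho(s)$, I may assemble them into one $M$-term approximant and estimate, using the Littlewood--Paley equivalence (\ref{3.6}) and the triangle inequality in $L_{p/2}$ (here $p\ge2$ enters),
$$
\Th_M(f)_p\ll\left(\sum_{\|s\|_1>n}\Th_{m_s}(\de_s(f))_p^2\right)^{1/2},\qquad M:=\sum_{\|s\|_1\le n}m_s+\sum_{\|s\|_1>n}m_s .
$$
Writing $\|\de_s(f)\|_2=2^{-r\|s\|_1}w_s$ with $\sum_sw_s^2\le A^2$ and putting $\ell=\|s\|_1=n+j$, the block estimate makes the coefficient of $w_s^2$ in this sum comparable to $2^{-2\rho n}\,2^{\,j\left(2a+\kappa(d-1)/d\right)}$, where $a:=\tfrac12-\tfrac1{pd}-r$ and $\rho:=r-\tfrac{\bt}{d}$. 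The hypothesis $r>\tfrac12-\tfrac1{pd}$ is precisely $a<0$, so I may fix $\kappa\in\bigl(0,-2ad/(d-1)\bigr)$; then that coefficient is $\ll 2^{-2\rho n}$ uniformly in $j\ge1$, and hence $\Th_M(f)_p\ll 2^{-\rho n}A$.

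It remains to count. Both sums defining $M$ are $\asymp 2^{n(d-1)/d}n^{d-1}$ — the first by a direct geometric sum over $\|s\|_1\le n$, the second because the exponent of $m_s$ decreases geometrically once $\|s\|_1>n$ — so $M\asymp 2^{n(d-1)/d}n^{d-1}$, whence $n\asymp\ln M$ and $2^{n}\asymp\bigl(M/(\ln M)^{d-1}\bigr)^{d/(d-1)}$. Substituting, $\Th_M(f)_p\ll 2^{-(r-\bt/d)n}A\ll\bigl(M/(\ln M)^{d-1}\bigr)^{-\frac{rd}{d-1}+\frac{\bt}{d-1}}A$, which is the asserted bound; the whole construction (exact rearrangement on the low blocks, a fixed greedy algorithm on the high blocks) is constructive.

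I expect the main point requiring care to be the gluing step — turning the independent block-wise greedy approximations into a single $M$-term approximant whose error is controlled by the $\ell_2$-sum of the block errors. The device $h_s\mapsto\de_s(h_s)$, which relies on the $L_p$-boundedness of the band projection and on the fact that it preserves the tensor-product structure, is what makes the Littlewood--Paley decoupling legitimate; the rest (the choice of $\kappa$ dictated by $r>\tfrac12-\tfrac1{pd}$, the verification of the geometric tail, and the two block counts) is routine once it is carried out in this order.
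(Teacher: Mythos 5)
Your proposal is correct and follows essentially the same route as the paper: the same splitting at level $n$ with exact representation of the blocks $\|s\|_1\le n$ using $m_s\asymp 2^{\|s\|_1(d-1)/d}$ terms, the same choice $m_s=\bigl[2^{(n-\kappa(\|s\|_1-n))(d-1)/d}\bigr]$ with $\kappa$ small relative to $r-(\tfrac12-\tfrac1{pd})$ on the blocks $\|s\|_1>n$, Lemma~\ref{L4.4} applied blockwise, Littlewood--Paley gluing, and the count $M\asymp 2^{n(d-1)/d}n^{d-1}$ leading to $2^{n(-r+\bt/d)}$. The extra care you take in the gluing step (replacing $h_s$ by $\de_s(h_s)$) is a sound way of making explicit what the paper handles via Remark~\ref{R4.1} and (\ref{3.6}).
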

\begin{proof} For $\|s\|_1 \le n$ set $m_s\asymp 2^{\|s\|_1(d-1)/d}$ such that for any $t\in T(\rho(s))$, $\Th_{m_s}(t)_p=0$. For $\|s\|_1 >n$ set 
$$
m_s:= \left[2^{(n-\kappa(\|s\|_1-n))(d-1)/d}\right]
$$
with $\kappa >0$ small enough to satisfy $r> \frac{1}{2}-\frac{1}{pd} + \kappa$. Then
$$
M_1:= \sum_{\|s\|_1\le n} m_s \asymp 2^{n(d-1)/d} n^{d-1}
$$
and
$$
M_2:= \sum_{\|s\|_1>n} m_s \asymp 2^{n(d-1)/d} n^{d-1}.
$$
By Lemma \ref{L4.4} we obtain for $M:=M_1+M_2$
$$
\Th_M(f)_p \ll \left(\sum_{\|s\|_1>n}\Th_{m_s}(\de_s(f))_p^2\right)^{1/2}
$$
$$
\ll \left(\sum_{\|s\|_1>n}(2^{-r\|s\|_1}2^{\left(\frac{1}{2}-\frac{1}{pd}\right)\|s\|_1}(\mb_s)^{-1/2}\|\de_s(f)\|_2 2^{r\|s\|_1})^2\right)^{1/2}
$$
$$
\ll 2^{n(-r+\left(\frac{1}{2}-\frac{1}{pd}\right) -\frac{1}{2}(d-1)/d)} = 2^{n(-r+\bt/d)}.
$$
\end{proof}

\begin{Theorem}\label{T4.4} Let $2\le p <\infty$. Denote $\bt := 1/2-1/p$. Then there is a constructive way provided by Lemma \ref{L4.1} to obtain the bound
$$
\Th_M(W^r_2)_p \ll \left(\frac{M}{(\ln M)^{d-1}}\right)^{-\frac{rd}{d-1} +\frac{\bt}{d-1}},\quad r>\bt.
$$
\end{Theorem}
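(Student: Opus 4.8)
The plan is to mimic, step for step, the proof of Theorem \ref{T4.3}, simply replacing Lemma \ref{L4.4} by Lemma \ref{L4.1} used with $q=2$ (and the given $p$). With these parameters $\bt=1/q-1/p=1/2-1/p$ is exactly the $\bt$ of the statement, and Lemma \ref{L4.1} yields, for $f\in T(\bN)$, the bound $\Th_m(f)_p\ll v(\bN)^{\bt}(\mb)^{-\bt}\|f\|_2$, realized constructively by the greedy-type algorithm from the proof of Lemma \ref{L4.1} (keep the $m$ de la Vall\'ee Poussin samples $f(z^h)$ of largest modulus), so the whole construction below is explicit.

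Following Theorem \ref{T4.3}, I would pass to the equivalent description $W^r_2A=\{f:\sum_s 2^{2r\|s\|_1}\|\de_s(f)\|_2^2\le A^2\}$, fix $n$, and treat the dyadic blocks $\de_s(f)$ separately. For $\|s\|_1\le n$ I would write $\de_s(f)\in T(\rho(s))$ exactly as a sum of $m_s\asymp 2^{\|s\|_1(d-1)/d}$ products, using the same efficient exact tensor decomposition of a trigonometric polynomial on a dyadic box employed in Theorems \ref{T1.1} and \ref{T4.3} (an induction on $d$ that peels off the variable with smallest $s_j$). For $\|s\|_1>n$ I would run the algorithm of Lemma \ref{L4.1} on $\de_s(f)$ with $m_s:=[2^{(n-\kappa(\|s\|_1-n))(d-1)/d}]$, $\kappa>0$ small enough that $r>\bt+\kappa$. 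As in Theorem \ref{T4.3} this gives $M_1:=\sum_{\|s\|_1\le n}m_s\asymp M_2:=\sum_{\|s\|_1>n}m_s\asymp 2^{n(d-1)/d}n^{d-1}$, hence $M:=M_1+M_2\asymp 2^{n(d-1)/d}n^{d-1}$, and the collection of all these products is an $M$-term element of $\Pi^d$.

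For the error, the Littlewood-Paley inequality (\ref{3.6}) together with the triangle inequality in $L_{p/2}$ ($p\ge2$) reduces the estimate to $\Th_M(f)_p\ll\bigl(\sum_{\|s\|_1>n}\Th_{m_s}(\de_s(f))_p^2\bigr)^{1/2}$, and Lemma \ref{L4.1} bounds each summand by $2^{\bt\|s\|_1}m_s^{-\bt}\|\de_s(f)\|_2\ll 2^{(\bt-r)\|s\|_1}m_s^{-\bt}a_s$ with $\sum_s a_s^2\le A^2$. Inserting the value of $m_s$, the exponent of $2$ is decreasing in $\|s\|_1$ on $\{\|s\|_1>n\}$ (this is where $r>\bt+\kappa$ enters) and equals $(\bt/d-r)n$ at $\|s\|_1=n$ — the same critical exponent produced in Theorem \ref{T4.3} by Lemma \ref{L4.4}, which is precisely why the two constructive bounds coincide. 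Summing the geometric series gives $\Th_M(f)_p\ll 2^{-(r-\bt/d)n}$, and $2^n\asymp\bigl(M/(\log M)^{d-1}\bigr)^{d/(d-1)}$ converts this into the claimed bound.

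I do not expect a genuine obstacle here, since the whole argument is a reprise of Theorem \ref{T4.3}; the one point that needs the same small care as there is the Littlewood-Paley step. The approximant produced by Lemma \ref{L4.1} on $\de_s(f)$ a priori lies in a slightly larger box $T(\bN)$ (Remark \ref{R4.1}) rather than in $T(\rho(s))$, so before summing the blocks I would compose it with the Fourier projection onto $\rho(s)$, which is a product of univariate multipliers — bounded on $L_p$ uniformly in $s$, and preserving both the tensor-product structure and the number $m_s$ of terms. After this correction the blocks are genuinely supported on $\rho(s)$, (\ref{3.6}) applies verbatim, and the argument closes.
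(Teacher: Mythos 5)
Your proposal is correct and follows essentially the same route as the paper: the paper's proof of Theorem \ref{T4.4} is exactly the argument of Theorem \ref{T4.3} with Lemma \ref{L4.4} replaced by Lemma \ref{L4.1} (with $q=2$), the same choice of $m_s$, and the same summation giving $2^{n(-r+\bt/d)}$. Your extra remark about projecting each block approximant onto $\rho(s)$ before the Littlewood--Paley step is a legitimate (and slightly more careful) treatment of a detail the paper leaves implicit, but it does not change the approach.
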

\begin{proof} The proof of this theorem is similar to the proof of Theorem \ref{T4.3}. We use the same notations as above. Then by Lemma \ref{L4.1} we obtain for $M:=M_1+M_2$
$$
\Th_M(f)_p \ll \left(\sum_{\|s\|_1>n}\Th_{m_s}(\de_s(f))_p^2\right)^{1/2}
$$
$$
\ll \left(\sum_{\|s\|_1>n}(2^{-r\|s\|_1}2^{\bt\|s\|_1}(\mb_s)^{-\bt}\|\de_s(f)\|_2 2^{r\|s\|_1})^2\right)^{1/2}
$$
$$
\ll 2^{n(-r+\bt -\bt(d-1)/d)} = 2^{n(-r+\bt/d)}.
$$
\end{proof}

{\bf Some improvements.} In this subsection we explain how Theorems \ref{T1.1}, \ref{T4.3}, and \ref{T4.4} can be slightly improved by changing the subdivision of the set of $s$. The rest of the proofs including the use of Lemmas \ref{L3.1}, \ref{L4.1}, and \ref{L4.4} is the same. 
For a nonnegative  $(d-1)$-dimensional integer vector $w$ define
$$
S(w,j) := \{s : s_j=\max_i s_i,\, s^j=w\},\quad j=1,\dots,d.
$$
For the set $\cup_{s\in S(w,j)}\rho(s)$ we set $m_w \asymp 2^{\|w\|_1}$ in such a way that for any $t \in T\left(\cup_{s\in S(w,j)}\rho(s)\right)$ we have $\Th_{m_w}(t)_p=0$. Then
$$
M_1':= \sum_{j=1}^d \sum_{\|w\|_1\le n(d-1)/d} m_w \le 2^{n(d-1)/d}n^{d-2}.
$$
For the remaining set $S^c$ of $s$ we have
$$
S^c := \{s : s\notin \cup_{j=1}^d\cup_{\|w\|_1\le n(d-1)/d} S(w,j)\} = \{s : \min_j \|s^j\|_1 > n(d-1)/d\}.
$$
For $s\in S^c$ we define as above
$$
m_s:= \left[2^{(n-\kappa(\|s\|_1-n))(d-1)/d}\right]
$$
with $\kappa >0$ small enough. Then
$$
M_2':= \sum_{s\in S^c} m_s \le d\sum_{\|s^1\|_1\ge n(d-1)/d} \sum_{s_1\ge \|s^1\|_1/(d-1)} m_s
$$
$$
\ll \sum_{\|s^1\|_1\ge n(d-1)/d} 2^{(n-\kappa(\|s^1\|_1 + \|s^1\|_1/(d-1) -n))(d-1)/d} \ll 2^{n(d-1)/d}n^{d-2}.
$$
The rest of the proofs is the same as in Theorems \ref{T1.1}, \ref {T4.3}, and \ref{T4.4}. We only need to notice that for $s\in S^c$ we have
$$
\|s\|_1 = \frac{1}{d-1} \sum_{j=1}^d \|s^j\|_1 > n.
$$
The above argument gives us the following slightly stronger versions of Theorems \ref{T1.1}, \ref {T4.3}, and \ref{T4.4}.

\begin{Theorem}\label{T1.1'} Let $2\le p<\infty$ and $r> (d-1)/d$. Then
$$
\Th_M(W^r_2)_p \ll  \left(\frac{M}{(\log M)^{d-2}}\right)^{-\frac{rd}{d-1}}.
$$
\end{Theorem}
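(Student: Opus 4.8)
The plan is to run the proof of Theorem \ref{T1.1} with only one change: the dyadic blocks $\rho(s)$ are grouped not by level sets of $\|s\|_1$ but by the finer scheme of the preceding \textbf{Some improvements} discussion. As there, I work with the equivalent norm given by
$$
W^r_2A=\Bigl\{f:\ \sum_s 2^{2r\|s\|_1}\|\de_s(f)\|_2^2\le A^2\Bigr\},
$$
so it suffices to produce, for each $n$ and each $f\in W^r_2A$, an $M$-term multilinear approximant with $M\asymp 2^{n(d-1)/d}n^{d-2}$ and error $\ll 2^{-rn}A$: then $\log M\asymp n$, hence $2^{n(d-1)/d}\asymp M/(\log M)^{d-2}$ and $2^{-rn}\asymp\bigl(M/(\log M)^{d-2}\bigr)^{-rd/(d-1)}$, which is the asserted bound.

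For the ``low'' part I group as follows. To each $s$ attach the coordinate $j=j(s)$ realising $\max_i s_i$ (on a tie take the smallest such index, which makes the groups below pairwise disjoint), and for a $(d-1)$-dimensional nonnegative integer vector $w$ with $\|w\|_1\le n(d-1)/d$ set $S(w,j):=\{s:\ j(s)=j,\ s^j=w\}$. On $\bigcup_{s\in S(w,j)}\rho(s)$ the spectrum is bounded by $2^{w_i}$ in every coordinate $i\ne j$ and unrestricted in the $j$-th, so the piece $\sum_{s\in S(w,j)}\de_s(f)$ is \emph{exactly} a finite sum of at most $m_w\asymp 2^{\|w\|_1}$ products of univariate functions: the $d-1$ factors in the variables $x_i$, $i\ne j$, are exponentials $e^{ik_ix_i}$, and the $j$-th factor is an arbitrary univariate function of $x_j$ (no polynomial assumption on $f$ is needed, since in the definition of $\Th_M$ the univariate factors are unrestricted). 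Summing over groups,
$$
M_1':=\sum_{j=1}^d\ \sum_{\|w\|_1\le n(d-1)/d} m_w\ \ll\ \sum_{\|w\|_1\le n(d-1)/d}2^{\|w\|_1}\ \ll\ 2^{n(d-1)/d}n^{d-2},
$$
and this is exactly where the improvement comes from: $w$ now ranges over a $(d-1)$-dimensional index set, costing $n^{d-2}$ rather than the $n^{d-1}$ of Theorem \ref{T1.1}, while the per-group cost $2^{\|w\|_1}$ is still capped at $2^{n(d-1)/d}$.

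The remaining blocks are those $s$ with $\|s^j\|_1>n(d-1)/d$ for \emph{every} $j$ realising the maximum, i.e. $S^c=\{s:\ \min_j\|s^j\|_1>n(d-1)/d\}$; summing the $d$ inequalities $\|s\|_1-s_j>n(d-1)/d$ gives $(d-1)\|s\|_1>n(d-1)$, so $\|s\|_1>n$ on $S^c$. For $s\in S^c$ I approximate $\de_s(f)$ by itself via Lemma \ref{L3.1} and Remark \ref{R3.1} with
$$
m_s:=\bigl[\,2^{(n-\kappa(\|s\|_1-n))(d-1)/d}\,\bigr],
$$
$\kappa>0$ small enough that $r>(d-1)/d+\kappa$. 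As in the \textbf{Some improvements} paragraph (for fixed $s^j$ with $j$ a maximal coordinate one has $s_j\ge\|s^j\|_1/(d-1)$, hence $\|s\|_1\ge\frac{d}{d-1}\|s^j\|_1>n$, and $m_s$ decays geometrically in $\|s\|_1$) one gets $M_2':=\sum_{s\in S^c}m_s\ll 2^{n(d-1)/d}n^{d-2}$, so $M:=M_1'+M_2'\asymp 2^{n(d-1)/d}n^{d-2}$. For the error, Lemma \ref{L3.1} gives
$$
\Th_{m_s}(\de_s(f))_p\ \ll\ v(\rho(s))^{1-1/d}(\mb_s)^{-1}\|\de_s(f)\|_2\ \ll\ 2^{(1+\kappa)(\|s\|_1-n)(d-1)/d}\|\de_s(f)\|_2,
$$
and, assembling the block errors by the Littlewood--Paley inequalities (\ref{3.6}) (valid for $p\ge 2$; the low-part blocks contribute $0$) exactly as in the proof of Theorem \ref{T1.1},
$$
\Th_M(f)_p\ \ll\ \Bigl(\sum_{s\in S^c}\Th_{m_s}(\de_s(f))_p^2\Bigr)^{1/2}\ \ll\ 2^{-rn}\Bigl(\sum_s 2^{2r\|s\|_1}\|\de_s(f)\|_2^2\Bigr)^{1/2}\ \ll\ 2^{-rn}A,
$$
the middle step because $(1+\kappa)(d-1)/d<r$ forces $2^{(1+\kappa)(\|s\|_1-n)(d-1)/d}\le 2^{-rn}2^{r\|s\|_1}$ for $\|s\|_1\ge n$.

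I expect the conceptual content to lie entirely in the choice of partition, with the obstacles being bookkeeping. The two points to watch are: keeping the groups genuinely disjoint (the tie-break in $j(s)$, without which a block with several maximal coordinates would be reproduced more than once), and carrying out the counting estimates $M_1',M_2'\ll 2^{n(d-1)/d}n^{d-2}$ carefully --- in particular checking that the threshold $n(d-1)/d$ for $\|w\|_1$ is precisely the one that forces $S^c\subseteq\{\|s\|_1>n\}$, which the error estimate relies on. Everything else --- the reduction to $W^r_2A$, the use of Lemma \ref{L3.1} with Remark \ref{R3.1} on the blocks of $S^c$, and the Littlewood--Paley combination of block errors --- is identical to the proof of Theorem \ref{T1.1}.
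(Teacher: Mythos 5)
Your proposal is correct and is essentially the paper's own argument: the same grouping $S(w,j)$ with threshold $\|w\|_1\le n(d-1)/d$, the same choices $m_w\asymp 2^{\|w\|_1}$ and $m_s=[2^{(n-\kappa(\|s\|_1-n))(d-1)/d}]$ on $S^c$, and the same combination of Lemma \ref{L3.1}, Remark \ref{R3.1} and the Littlewood--Paley inequalities as in Theorem \ref{T1.1}. Your added details (the tie-break making the groups disjoint, and the explicit exact representation of each low group as $\asymp 2^{\|w\|_1}$ products with an unrestricted factor in $x_{j}$) only make explicit what the paper leaves implicit.
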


\begin{Theorem}\label{T4.5} Let $2\le p <\infty$. Denote $\bt := 1/2-1/p$. Then there is a constructive way provided by Lemma \ref{L4.4} to obtain the bound
$$
\Th_M(W^r_2)_p \ll \left(\frac{M}{(\ln M)^{d-2}}\right)^{-\frac{rd}{d-1} +\frac{\bt}{d-1}},\quad r>\frac{1}{2}-\frac{1}{pd}.
$$
\end{Theorem}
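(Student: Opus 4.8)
The plan is to run the proof of Theorem~\ref{T4.3} almost verbatim, changing only the way the dyadic blocks $\de_s(f)$ are bundled: instead of the crude split of $\{s\}$ into $\{\|s\|_1\le n\}$ and $\{\|s\|_1>n\}$, I would use the finer subdivision by the sets $S(w,j)$ and $S^c$ introduced just above. This is exactly the modification that upgrades the $(\log M)^{d-1}$ factor of Theorem~\ref{T4.3} to $(\log M)^{d-2}$, while the final exponent $-\tfrac{rd}{d-1}+\tfrac{\bt}{d-1}$ is produced by the identical arithmetic.

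First I would fix $f\in W^r_2$, pass to the equivalent norm $\sum_s 2^{2r\|s\|_1}\|\de_s(f)\|_2^2\le A^2$, and, for a parameter $n$ to be chosen at the end, call a block $s$ \emph{low} if $s\in S(w,j)$ for some $j$ and some $w$ with $\|w\|_1\le n(d-1)/d$, and \emph{high} otherwise (so the high blocks are exactly $S^c=\{s:\min_j\|s^j\|_1>n(d-1)/d\}$). Resolving overlaps so that the low blocks are partitioned into groups indexed by pairs $(w,j)$, on each group the polynomial $\sum_{s\in S(w,j)}\de_s(f)$ lies in $T\bigl(\cup_{s\in S(w,j)}\rho(s)\bigr)$; singling out the coordinate $j$ realizing the maximum and expanding it as $\sum_{k^j}u_{k^j}(x_j)e^{i(k^j,x^j)}$ exhibits it as a sum of $\asymp\prod_{i\ne j}2^{w_i}=2^{\|w\|_1}$ products supported in the same blocks, so $m_w\asymp 2^{\|w\|_1}$ terms reproduce it exactly. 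Since the number of $(d-1)$-dimensional nonnegative integer vectors $w$ with $\|w\|_1=\ell$ is $\asymp\ell^{d-2}$, summing over the $d$ choices of $j$ and over $\|w\|_1\le n(d-1)/d$ gives $M_1'=\sum_j\sum_w m_w\ll 2^{n(d-1)/d}n^{d-2}$.

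Next, for each $s\in S^c$ I would apply Lemma~\ref{L4.4} to $\de_s(f)\in T(\rho(s))$ (for which $v\asymp 2^{\|s\|_1}$), choosing $m_s:=\bigl[2^{(n-\kappa(\|s\|_1-n))(d-1)/d}\bigr]$ with $\kappa>0$ small enough that $r>\tfrac{1}{2}-\tfrac{1}{pd}+\kappa$; a geometric summation over $S^c$ (parametrized, say, by the component $s^1$ with $\|s^1\|_1\ge n(d-1)/d$ when $s_1$ is the maximal coordinate) yields $M_2'\ll 2^{n(d-1)/d}n^{d-2}$, and one records the identity $\|s\|_1=\tfrac{1}{d-1}\sum_{j=1}^d\|s^j\|_1>n$ valid for every $s\in S^c$. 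Because the WCGA/WGAFR approximants of Lemma~\ref{L4.4} may be taken from $T(\rho(s))$ (Remark~\ref{R4.1}) and $p\ge 2$, the Littlewood--Paley inequality (\ref{3.6}) followed by Minkowski's inequality in $L_{p/2}$ bounds the total error by
$$
\Th_M(f)_p\ll\Bigl(\sum_{s\in S^c}\Th_{m_s}(\de_s(f))_p^2\Bigr)^{1/2}\ll\Bigl(\sum_{s\in S^c}\bigl(2^{\left(\frac{1}{2}-\frac{1}{pd}\right)\|s\|_1}(\mb_s)^{-1/2}\|\de_s(f)\|_2\bigr)^2\Bigr)^{1/2}.
$$
Writing $\|\de_s(f)\|_2=2^{-r\|s\|_1}c_s$ with $\sum_s c_s^2\le A^2$, the prefactor $2^{2\|s\|_1\left(\frac{1}{2}-\frac{1}{pd}-r\right)-(n-\kappa(\|s\|_1-n))(d-1)/d}$ is, since $r>\tfrac{1}{2}-\tfrac{1}{pd}$, a decreasing function of $\|s\|_1$ on $[n,\infty)$ and hence largest at $\|s\|_1=n$, where it equals $2^{2n(-r+\bt/d)}$ (using $\tfrac{1}{2}-\tfrac{1}{pd}-\tfrac{d-1}{2d}=\tfrac{\bt}{d}$); so the sum is $\ll 2^{2n(-r+\bt/d)}A^2$. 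Finally $M=M_1'+M_2'\asymp 2^{n(d-1)/d}n^{d-2}$, i.e. $2^{n(d-1)/d}\asymp M/(\log M)^{d-2}$, turns $2^{n(-r+\bt/d)}=\bigl(2^{n(d-1)/d}\bigr)^{-\frac{rd}{d-1}+\frac{\bt}{d-1}}$ into the claimed bound.

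The only genuinely new work compared with Theorem~\ref{T4.3} is the combinatorics of the refined decomposition: verifying $M_1'\ll 2^{n(d-1)/d}n^{d-2}$ (which rests on the count $\#\{w:\|w\|_1=\ell\}\asymp\ell^{d-2}$) and, a little more delicately, $M_2'\ll 2^{n(d-1)/d}n^{d-2}$, which needs the geometric decay of $m_s$ for $\|s\|_1>n$ together with an accurate description of the cone-like set $S^c$. I expect this counting to be the main (and essentially the only) obstacle; the exact tensor representation on a group of blocks, the invocation of Lemma~\ref{L4.4}, and the Littlewood--Paley/Minkowski assembly are identical to what is already carried out in the proofs of Lemma~\ref{L4.4} and Theorem~\ref{T4.3}.
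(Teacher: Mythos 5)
Your proposal is essentially the paper's own proof: the authors establish Theorem \ref{T4.5} exactly by the refined subdivision into the groups $S(w,j)$ (with $m_w\asymp 2^{\|w\|_1}$ giving exact representation on $T\bigl(\cup_{s\in S(w,j)}\rho(s)\bigr)$) and the cone $S^c=\{s:\min_j\|s^j\|_1>n(d-1)/d\}$, the counts $M_1',M_2'\ll 2^{n(d-1)/d}n^{d-2}$, and then the same application of Lemma \ref{L4.4} and the same arithmetic as in Theorem \ref{T4.3}, using $\|s\|_1>n$ on $S^c$. Your execution of these steps, including the final conversion $2^{n(d-1)/d}\asymp M/(\log M)^{d-2}$, matches the paper's argument.
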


\begin{Theorem}\label{T4.6} Let $2\le p <\infty$. Denote $\bt := 1/2-1/p$. Then there is a constructive way provided by Lemma \ref{L4.1} to obtain the bound
$$
\Th_M(W^r_2)_p \ll \left(\frac{M}{(\ln M)^{d-2}}\right)^{-\frac{rd}{d-1} +\frac{\bt}{d-1}},\quad r>\bt.
$$
\end{Theorem}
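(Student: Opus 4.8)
The plan is to follow the proof of Theorem \ref{T4.4} verbatim, except that the ``diagonal'' splitting of the frequency index set into $\{\|s\|_1\le n\}$ and $\{\|s\|_1>n\}$ is replaced by the finer splitting built from the sets $S(w,j)$ displayed just above the statement, and that the block estimates are supplied by the constructive Lemma \ref{L4.1} applied with $q=2$. As in Section 3 I would work with the equivalent class $W^r_2A=\{f:\sum_s 2^{2r\|s\|_1}\|\de_s(f)\|_2^2\le A^2\}$ and the decomposition $f=\sum_s\de_s(f)$.

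First I would treat the ``boundary'' blocks. For a fixed $j$ and a $(d-1)$-dimensional vector $w$ with $\|w\|_1\le n(d-1)/d$, put $g_{w,j}:=\sum_{s\in S(w,j)}\de_s(f)$. Its Fourier coefficients are supported on those $k$ with $|k_i|<2^{w_i}$ for every $i\ne j$ and $k_j$ unrestricted; grouping the Fourier series of $g_{w,j}$ by the tuple $k^j=(k_i)_{i\ne j}$ --- there are $\asymp 2^{\|w\|_1}$ of them --- and collecting the remaining one-dimensional sum in $x_j$ into a single univariate coefficient function exhibits $g_{w,j}$ as a sum of $m_w\asymp 2^{\|w\|_1}$ elements of $\Pi^d$, so $\Th_{m_w}(g_{w,j})_p=0$. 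Adding these budgets over $w$ and $j$ gives $M_1'\ll 2^{n(d-1)/d}n^{d-2}$, exactly as in the computation preceding the statement.

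Next, for $s$ in the complementary set $S^c=\{s:\min_j\|s^j\|_1>n(d-1)/d\}$ I would apply Lemma \ref{L4.1} to $\de_s(f)\in T(\bN)$ with $\bN$ chosen so that $v(\bN)\asymp 2^{\|s\|_1}$, using the budget $m_s=[2^{(n-\kappa(\|s\|_1-n))(d-1)/d}]$; the geometric-series estimate already displayed shows $M_2'\ll 2^{n(d-1)/d}n^{d-2}$. Set $M:=M_1'+M_2'\asymp 2^{n(d-1)/d}n^{d-2}$. Since the boundary blocks are reproduced exactly, the Littlewood--Paley inequality (\ref{3.6}) (this is where $2\le p<\infty$ enters) together with Lemma \ref{L4.1} gives
$$
\Th_M(f)_p\ll\left(\sum_{s\in S^c}\Th_{m_s}(\de_s(f))_p^2\right)^{1/2}\ll\left(\sum_{s\in S^c}2^{2\bt\|s\|_1}m_s^{-2\bt}\|\de_s(f)\|_2^2\right)^{1/2}.
$$
Using $\|s\|_1=\frac{1}{d-1}\sum_j\|s^j\|_1>n$ on $S^c$, substituting the value of $m_s$, factoring out $2^{2r\|s\|_1}\|\de_s(f)\|_2^2$, and choosing $\kappa>0$ small enough --- possible precisely because $r>\bt$ --- so that the exponent of the remaining power of $2$ is strictly decreasing in $\|s\|_1$, I obtain $\Th_M(f)_p\ll 2^{n(-r+\bt/d)}A$. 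Finally, inverting $M\asymp 2^{n(d-1)/d}n^{d-2}$ gives $n\asymp\ln M$ and $2^{n(d-1)/d}\asymp M/(\ln M)^{d-2}$, so that $2^{n(-r+\bt/d)}\asymp (M/(\ln M)^{d-2})^{-\frac{rd}{d-1}+\frac{\bt}{d-1}}$, which is the asserted bound.

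The step I expect to need the most care is the uniform exact-reproduction bound $\Th_{m_w}(g_{w,j})_p=0$ with $m_w\asymp 2^{\|w\|_1}$: the subtlety is that $g_{w,j}$ has unbounded spectrum in the variable $x_j$, yet its tensor rank is controlled by the $d-1$ remaining modes, each of bounded size $2^{w_i}$, because grouping by those modes leaves a genuinely univariate factor in $x_j$ and the resulting bound is independent of $f$. Everything else --- the two budget counts $M_1'$, $M_2'$, the Littlewood--Paley combination, the arithmetic with $\kappa$, and the passage from $n$ to $M$ --- is the same bookkeeping as in the proofs of Theorems \ref{T4.3} and \ref{T4.4} and in the improvement computation already displayed.
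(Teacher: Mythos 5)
Your proposal is correct and takes essentially the same route as the paper: the paper obtains Theorem \ref{T4.6} by rerunning the proof of Theorem \ref{T4.4} with the finer subdivision into the sets $S(w,j)$ (treated exactly with budgets $m_w\asymp 2^{\|w\|_1}$) and the complement $S^c$ (treated by Lemma \ref{L4.1} with the budgets $m_s$), which is precisely what you do, and your grouping-by-$k^j$ argument for $\Th_{m_w}(g_{w,j})_p=0$ simply fills in the exact-reproduction claim the paper asserts without detail. No gaps to report.
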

  
\newpage

\end{document}